%% file: neurips_2026.tex
\documentclass{article}

 \usepackage[main, final]{neurips_2026}

\usepackage{xcolor}    
\usepackage{listings}  
 
\definecolor{codegreen}{rgb}{0,0.6,0}
\definecolor{codegray}{rgb}{0.5,0.5,0.5}
\definecolor{codepurple}{rgb}{0.58,0,0.82}
\definecolor{backcolour}{rgb}{0.95,0.95,0.92}

\lstdefinestyle{mystyle}{
    commentstyle= \color{red!50!green!50!blue!50},  
    keywordstyle= \color{blue!70},  
    numberstyle=\tiny\color{codegray},  
    stringstyle=\color{codepurple},
    basicstyle=\ttfamily\footnotesize,
    breakatwhitespace=false,
    breaklines=true,  
    captionpos=b,
    keepspaces=true,
    numbers=left,  
    numbersep=5pt,
    showspaces=false,
    showstringspaces=false,  
    showtabs=false,
    tabsize=2,
    frame=single  
}
 
\usepackage[utf8]{inputenc} 
\usepackage[T1]{fontenc}    
\usepackage{hyperref}       
\usepackage{url}            
\usepackage{booktabs}       
\usepackage{amsfonts}       
\usepackage{nicefrac}       
\usepackage{microtype}      
\usepackage{xcolor}         

\usepackage{amsmath}
\usepackage{amssymb}
\usepackage{mathtools}
\usepackage{amsthm}
\usepackage{bbm}

\usepackage[T1]{fontenc}
\usepackage[utf8]{inputenc}

\theoremstyle{plain}
\newtheorem{theorem}{Theorem}
\newtheorem{lemma}{Lemma}

\theoremstyle{definition}

\theoremstyle{remark}

\usepackage[utf8]{inputenc} 
\usepackage[T1]{fontenc}    
\usepackage{hyperref}       
\usepackage{url}            
\usepackage{booktabs}       
\usepackage{amsfonts}       
\usepackage{nicefrac}       
\usepackage{microtype}      
\usepackage{xcolor}         
\input{math_command}

\usepackage{caption}
\usepackage{wrapfig}
\usepackage{makecell}
\usepackage{url}
\usepackage{float}
\usepackage{kotex}
\usepackage{bm}
\usepackage{multirow}
\usepackage{xcolor}
\usepackage{colortbl}
\usepackage{tabulary}
\usepackage{listings}
\usepackage{multicol}
\usepackage{xspace}
\usepackage{graphbox}
\usepackage{arydshln}
\usepackage{adjustbox}
\usepackage{diagbox}
\usepackage{subcaption}
\usepackage{setspace}  
\usepackage{algorithm}
\usepackage{algpseudocode}

\usepackage{tcolorbox}
\tcbuselibrary{skins, breakable}

\title{Opening the Black Box of Classifier-Free Guidance \\via Information Bottleneck}

\author{
Jiayang Gao$^{1,2,\ast}$, Tianyi Zheng$^{2,\ast}$, Jiayang Zou$^3$, \\
\textbf{Fengxiang Yang$^2$, Luyao Fan$^1$, Lv Tang $^4$, Bo Li$^{2,\dagger}$, Jia Wang$^{1,\dagger}$ }
  \\$^1$ {Shanghai Jiao Tong University, Shanghai, China} 
  \\
  $^2${vivo BlueImage Lab, vivo Mobile Communication Co., Ltd., China} \\
  $^3$ {Stanford University, California, USA} 
  \\
  $^4$ {Michael Smith Laboratories, University of British Columbia} 
  \\
  {\tt\small {\{gjy0515, jiawang\}}@sjtu.edu.cn, \{zhengtianyi, libra\}@vivo.com, jyangzou@stanford.edu}}

\begin{document}
\addtocontents{toc}{\protect\setcounter{tocdepth}{0}}

\maketitle
\footnotetext{$\ast$ These authors contributed equally to this work.}
\footnotetext{$\dagger$ Corresponding authors.}

\input{sec/abstract}
\input{sec/intro}

\input{sec/background}
\input{sec/theory}

\input{sec/method}

\input{sec/result}

\input{sec/conclusion}

\bibliography{neurips}
\bibliographystyle{unsrt}

\newpage
\appendix
\newpage
\addtocontents{toc}{\protect\setcounter{tocdepth}{2}}   
\hypersetup{linkcolor=black}
\newpage
\section*{\centering\LARGE Appendix}

\begingroup
\setstretch{1.4}   
\large
\tableofcontents
\endgroup

\hypersetup{linkcolor=red}
\newpage

\appendix
\input{sec/appendix}

\clearpage
\newpage

\input{checklist.tex}

\end{document}

%% file: math_command.tex
\usepackage{amsmath,amsfonts,bm}

\def\eqref#1{equation~\ref{#1}}

\def\1{\bm{1}}

\def\rd{{\textnormal{d}}}

\def\rmI{{\mathbf{I}}}

\DeclareMathAlphabet{\mathsfit}{\encodingdefault}{\sfdefault}{m}{sl}
\SetMathAlphabet{\mathsfit}{bold}{\encodingdefault}{\sfdefault}{bx}{n}

\def\rmI{{\mathbf{I}}}

\newcommand{\innerprod}[2]{\left<{#1},{#2}\right>}

\newcommand{\x}{{\boldsymbol x}}

\newcommand{\g}{{\boldsymbol g}}

\newcommand{\cb}{{\boldsymbol c}}

\newcommand{\epsilonb}{{\boldsymbol \epsilon}}

\definecolor{cfg}{rgb}{0.906, 0.435, 0.318}
\definecolor{cfgpp}{rgb}{0.165, 0.616, 0.561}
\definecolor{cfgnull}{rgb}{0.208, 0.565, 0.953}

\newcommand{\epscond}{{\hat\epsilonb_\cb}}

\newcommand{\epsnullnc}{{\hat\epsilonb_\varnothing}}
\newcommand{\epscfgnc}{{\hat\epsilonb_\cb^\omega}}


%% file: sec/abstract.tex
\begin{abstract}
Classifier-free guidance (CFG) is a standard technique for improving conditional generation in diffusion models, yet its activation schedule is usually selected heuristically. In practice, most existing CFG-based methods rely on a pre-defined activation policy throughout inference, while recent dynamic variants often depend on hand-designed schedules without a principled theoretical explanation of when conditional information is most useful. 
In order to understand how the informational relevance between conditioning inputs and the score function evolves over the course of generation, we explore when guidance should be activated from an information-theoretic perspective. In this work, we formulate the diffusion trajectory as a time-dependent information bottleneck and show that the effectiveness of conditional information is not uniform throughout sampling, but becomes significantly stronger after a critical stage characterized by a Fisher-information balance condition.
Motivated by this insight, we propose a training-free dynamic CFG strategy for inference. Rather than prescribing a fixed schedule in advance, our method constructs a stochastic process from conditional and unconditional score statistics and activates guidance when the estimated guidance gain crosses a prescribed threshold. This yields an adaptive activation rule that aligns condition injection with the underlying generation dynamics, while also recovering interval-based guidance as a limiting special case.
Extensive experiments across diverse architectures and tasks, demonstrate that our method is effective, robust, and broadly applicable. Beyond empirical improvements, our framework provides an interpretation of dynamic guidance methods and a principled foundation for inference-time control in conditional diffusion models.
\end{abstract}

%% file: sec/intro.tex
\section{Introduction}
Diffusion models \citep{SohlDickstein2015DeepUL,Song2019GenerativeMB,Song2020DenoisingDI,Song2020ScoreBasedGM} have become a major paradigm for high-quality image generation \cite{Dhariwal2021DiffusionMB}, text-to-image synthesis \cite{rombach2022high} and 3D generation \citep{woo2024harmonyview}. With the development of conditional diffusion models, the generation process can be effectively controlled by class labels \cite{Ho2022ClassifierFreeDG, Dhariwal2021DiffusionMB}, textual descriptions \cite{balaji2022ediff,ramesh2022hierarchical}, or other forms of prior information. Among existing conditioning methods, Classifier-Free Guidance (CFG) \cite{Ho2022ClassifierFreeDG} has become one of the most widely used mechanisms for conditional generation. Most existing CFG-based methods are applied with a fixed guidance strategy throughout inference, implicitly assuming that conditional information is equally important across all denoising stages. However, this assumption may be overly simplistic, as the relevance of conditional information is likely to evolve over time, which is a phenomenon that remains insufficiently understood. 
Although several recent methods have attempted to improve this design through dynamic guidance strategies, such as time-dependent reweighting \cite{malarz2025classifier, poleski2025geoguide, Gao_2026_CVPR,11143960}, fixed interval activation \cite{kynkaanniemi2024applying}, or modified guidance dynamics \cite{wang2026cfg, Sadat2025GuidanceIT}, these approaches still rely largely on heuristic schedules.
In particular, it remains unclear how the informational relevance between conditioning inputs and the score function evolves during generation, and hence when conditional guidance should be activated most effectively.

Diffusion models generate samples by learning a reverse denoising process that progressively reconstructs the target data distribution. For conditional generation, the key question is how to inject conditioning information into this process effectively. Early approaches such as classifier guidance \cite{Dhariwal2021DiffusionMB}, as well as more recent TFG-style methods \cite{ye2024tfg,Shen2024UnderstandingAI}, use external classifiers to provide conditional gradients, but they often struggle to remain stable under heavy noise. By contrast, CFG \cite{Ho2022ClassifierFreeDG}  incorporates conditional information directly into training by jointly learning conditional and unconditional branches, making it more robust and practically appealing. Based on CFG, a series of variant methods replaces the unconditional output with alternative degraded references, such as the prediction of an under-trained conditional network \cite{Karras2024autoguidance} or that of a conditional network with perturbed conditioning inputs \cite{rajabi2025token}. 
These approaches preserve the general guidance form of comparing a target conditional prediction against a weaker reference, while modifying how the reference branch is constructed.

Taken together, these conditioning and guidance strategies highlight that the effectiveness of guidance depends not only on how conditional information is injected, but also on when it becomes most relevant to the denoising dynamics. 
This raises a more fundamental question: \textbf{what are the benefits of properly scheduling the injection of conditional information, and how can this be understood from a theoretical perspective?} 
To answer this, we investigate how conditional information propagates through a conditional diffusion model, and identify when it becomes most influential. We approach this problem through the lens of the information bottleneck, enabling a principled analysis of how conditioning signals are preserved and utilized across the denoising trajectory.
In particular, we study the relationship between conditioning inputs and the score function, and establish a connection between mutual information and score-related statistics. Our theoretical analysis reveals that the effectiveness of conditional information is not uniform throughout inference; instead, it becomes significantly stronger only after a critical stage. Correspondingly, the discrepancy between conditional and unconditional predictions, together with changes in score norms, provides observable signatures of this phenomenon. These theoretical findings are consistent with empirical observations, suggesting that uniformly applying a fixed CFG strategy across the entire inference process is suboptimal.

Building on these insights, we propose a dynamic CFG switching strategy for inference. Our key idea is to activate conditional guidance only when the generation process reaches the information bottleneck, so that the timing of condition injection is explicitly aligned with the theoretical analysis. 
The method is training-free and can be readily applied to a broad range of diffusion models and generation tasks, establishing a principled design criterion for CFG-based generation highlighting the differences and connections between various dynamic guidance strategies. In particular, it clarifies Interval Guidance \cite{kynkaanniemi2024applying} can be viewed through the same theoretical lens, offering insights into how we choose time to activate CFG.
Extensive experiments on SiT \cite{ma2024sit, yu2025repa}, DiT \cite{Peebles2022DiT}, FLUX \cite{labs2025flux1kontextflowmatching}, and Stable Diffusion \cite{rombach2022high, esser2024scalingrectifiedflowtransformers} show that our approach is both effective and broadly applicable.
Our contributions are summarized as follows:

\begin{itemize}
    \item \textbf{Information-Theoretic Perspective:}
    We introduce a time-dependent information-bottleneck
    framework to investigate the role of conditional
    information throughout the diffusion trajectory.
    By connecting mutual information evolution with
    Fisher information, we derive a balance condition
    that characterizes an information-theoretically
    motivated transition region for CFG activation.

    \item \textbf{Principled Activation Criterion:}
    Building on the theoretical analysis, we derive
    an observable guidance-gain statistic from the
    difference between conditional and unconditional
    score norms. We further develop a rescaled
    threshold criterion that improves numerical
    conditioning and provides an interpretable,
    model-dependent mechanism for identifying
    effective guidance activation regions.

    \item \textbf{Adaptive Guidance Strategy:}
    We propose a training-free and plug-and-play
    dynamic CFG strategy that determines the
    activation time online through a threshold-crossing
    rule. Our framework accommodates both batch-level
    and single-sample inference, while recovering
    fixed-interval guidance in the large-batch
    concentration regime, thereby establishing
    a connection between adaptive and
    predetermined guidance schedules.

    \item \textbf{Extensive Empirical Validation:}
    We conduct comprehensive experiments across
    diverse diffusion and flow-based architectures,
    including SiT, DiT, SD3, and FLUX. The results
    demonstrate improvements in generation quality
    under various evaluation settings, supported
    by multi-seed stability analyses, alternative
    activation criteria, and high-resolution
    single-sample inference experiments.
\end{itemize}

%% file: sec/background.tex
\section{Background}\label{background}
\paragraph{Diffusion Models and CFG.}
Diffusion models \citep{SohlDickstein2015DeepUL,Song2019GenerativeMB,Song2020DenoisingDI,Song2020ScoreBasedGM} generate data by reversing a gradual noising process. In the continuous-time formulation, the forward diffusion process can be described by the following stochastic differential equation (SDE) \cite{Song2020ScoreBasedGM}:
\begin{align}\label{gen_sde}
    \mathrm{d} x_t = f(x_t, t) \, \mathrm{d}t + g(t) \, \mathrm{d}w_t,
\end{align}
where $f(x_t, t)$ is the drift coefficient, $g(t)$ is the diffusion coefficient, and $w_t$ is a standard Wiener process. The corresponding reverse-time SDE \citep{anderson1982reverse} is given by
\begin{align}
    \mathrm{d} x_t = \left[ f(x_t, t) - \frac{1}{2} \left( g^2(t) + \sigma^2(t) \right) \nabla \log p_t(x_t) \right] \mathrm{d}t + \sigma(t) \, \mathrm{d}\bar{w}_t,
\end{align}
where $\bar{w}_t$ denotes a standard reverse-time Wiener process, and $\sigma(t)$ specifies the noise scale. The score function $\nabla \log p_t(x_t)$ is the target quantity that the network aims to predict \cite{Song2020ScoreBasedGM}. For conditional generation, Classifier-Free Guidance (CFG) combines conditional and unconditional predictions to steer sampling:
\begin{align}\label{CFG}
\epsilon_\omega(x_t, t, c) = \omega \, \epsilon_\theta(x_t, t, c) - (\omega - 1)\epsilon_\theta(x_t, t, \varnothing),
\end{align}
where $\epsilon_\theta(x_t, t, \cdot)$ denotes the conditional noise predictor and $\omega \ge 1$ controls the guidance strength. We defer a more detailed discussion of representative guidance variants to Appendix~\ref{Related Work}.

\paragraph{Mutual Information, Fisher Information, and Information Bottleneck (IB).}
To analyze the role of conditional information, we adopt an information-theoretic perspective. The mutual information \cite{Cover2005ElementsOI} between random variables $X$ and $Y$ is defined as
\begin{align}
I(X;Y)
=
\mathbb{E}_{p(x,y)}
\left[
\log \frac{p(x,y)}{p(x)p(y)}
\right],
\end{align}
which measures their statistical dependence. We also consider Fisher information,
\begin{align}
J(X)
=
\mathbb{E}_{p(x)}
\left[
\left\|
\nabla_x \log p(x)
\right\|^2
\right],\quad 
J(X|Y)=
\mathbb{E}_{p(x,y)}
\left[
\left\|
\nabla_x \log p(x|y)
\right\|^2
\right]
\end{align}
which is directly related to score functions. Analogous to the decomposition of mutual information, the Fisher mutual information between $X$ and $Y$ can be defined as
\begin{align}
J(X; Y)
=
J(X |Y)-J(X),
\end{align}
which quantifies the amount of Fisher information about $X$ explained by conditioning on $Y$. At last we introduce information bottleneck principle \cite{tishby2000information}, which formulates representation learning as a trade-off between compression and relevance. Given random variables $X$ and $C$, IB seeks an intermediate representation $T$ that minimizes
\begin{align}\label{eq IB}
    \mathcal{L}_{\text{IB}}=I(X;T)-\beta I(T;C), \quad \beta>0,
\end{align}
where $\beta$ is a Lagrange multiplier controlling the balance between compression of the input $X$ and preservation of information relevant to the target variable $C$. Beyond its original formulation, the IB principle has also been used to interpret the training dynamics of neural networks \citep{tishby2015deep}. In this view, a neural network induces a Markov chain $Y \to X \to \hat{X}$, where $\hat{X}$ denotes the network representation or output. The learning objective can then be cast as an information bottleneck problem of the form
\begin{align}\label{eq:IBNN}
\mathcal{L}\left[p(\hat{x}| x)\right]
=
I(X; \hat{X})
-
\beta I(\hat{X}; Y),
\end{align}
which trades off the complexity of the representation $\hat{X}$ against its predictive power with respect to the labels $Y$. In our setting, this perspective provides a useful lens for understanding when conditional information becomes most important during generation.

%% file: sec/theory.tex
\section{Theoretical Analysis}\label{theory}
\subsection{Information Bottleneck on Conditional Diffusion Model}
To analyze classifier-free guidance (CFG) from an information-theoretic perspective, we view the diffusion trajectory as a Markov chain $C\to X_0\to X_t$, where $C$ represents the conditioning information (such as class labels in image classification tasks or textual prompts in text-to-image generation),  and $X_t$ is the noisy latent variable at diffusion time step $t$. Thus we can formulate CFG as a time-dependent information bottleneck problem. Specifically, we consider the objective
\begin{align} \label{eq CFG IB}
    \inf_{t} L(t) = I(X_{0};X_t)-\beta I(X_{t};C), \quad \beta >0,
\end{align}
where $t$ denotes the diffusion time and $\beta$ controls the relative emphasis on conditional information.

We first characterize the necessary condition satisfied by an interior minimizer of \eqref{eq CFG IB}.

\begin{theorem}[Information Bottleneck Optimality for Conditional Diffusion Models] \label{thm:cfg_ib}
    Assume that $X_t$ follows the forward SDE \eqref{gen_sde}, and that the mutual information in \eqref{eq CFG IB} are differentiable with respect to $t$. Assume that by choosing an appropriate $\beta$, the minimizer $t^*$ of \eqref{eq CFG IB} lies in the interior of the interval $(0, +\infty)$.Then $t^*$ satisfies
    \begin{align}
\beta J(X_{t^*}; C)=J(X_{t^*}; X_0),
\end{align}
    It can also be written as 
    \begin{align}
        \beta\left(J(X_{t^*}|C)-J(X_{t^*})\right)=J(X_{t^*}|X_0)-J(X_{t^*}).
    \end{align}
\end{theorem}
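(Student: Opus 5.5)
The plan is to differentiate $L(t)$ in $t$ and set $L'(t^*)=0$. Because $t^*$ is an interior minimizer and $L$ is differentiable by assumption, this first-order condition is necessary. Each derivative of mutual information is then rewritten through de Bruijn-type identities, which express the time derivative of entropy along the forward SDE \eqref{gen_sde} in terms of Fisher information.

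\textbf{Step 1 (decomposition).} Write $I(X_0;X_t)=h(X_t)-h(X_t|X_0)$ and $I(X_t;C)=h(X_t)-h(X_t|C)$. Then
\begin{align*}
L(t)=(1-\beta)h(X_t)-h(X_t|X_0)+\beta h(X_t|C),
\end{align*}
so only time derivatives of (conditional) differential entropies are needed.

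\textbf{Step 2 (de Bruijn along the SDE).} For the forward SDE, the marginal density $p_t$ satisfies the Fokker--Planck equation $\partial_t p_t=-\nabla\cdot(fp_t)+\tfrac{g^2}{2}\Delta p_t$. Differentiating $h(X_t)=-\int p_t\log p_t$ and integrating by parts, assuming boundary terms vanish, gives
\begin{align*}
\tfrac{d}{dt}h(X_t)=\tfrac{g(t)^2}{2}J(X_t)+\mathbb{E}[\nabla\cdot f(X_t,t)].
\end{align*}
The same computation applies conditionally. Conditioned on $X_0=x_0$ or on $C=c$, the process is still driven by the same SDE, so $p_t(\cdot|x_0)$ and $p_t(\cdot|c)$ obey the same Fokker--Planck equation. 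Averaging over the conditioning variable yields the analogous identities with $J(X_t|X_0)$ and $J(X_t|C)$.

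\textbf{Step 3 (cancellation and conclusion).} The drift term $\mathbb{E}[\nabla\cdot f(X_t,t)]$ is the same expectation under the joint law in all three identities. Its coefficients in $L(t)$ are $(1-\beta)$, $-1$ and $+\beta$, which sum to zero, so it cancels. We obtain
\begin{align*}
L'(t)=\tfrac{g(t)^2}{2}\Big[(1-\beta)J(X_t)-J(X_t|X_0)+\beta J(X_t|C)\Big].
\end{align*}
Setting $L'(t^*)=0$ and using $g(t^*)>0$ (non-degenerate diffusion) gives
\begin{align*}
\beta\big(J(X_{t^*}|C)-J(X_{t^*})\big)=J(X_{t^*}|X_0)-J(X_{t^*}).
\end{align*}
This is the second form in the statement. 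By the definition $J(X;Y)=J(X|Y)-J(X)$, it is equivalent to $\beta J(X_{t^*};C)=J(X_{t^*};X_0)$.

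\textbf{Main obstacle.} The delicate step is Step 2: justifying the de Bruijn identity for a general drift $f$. This requires regularity and decay of $p_t$ so that differentiation under the integral and integration by parts are valid. It also requires checking that the conditional laws given $C$ genuinely follow the same SDE, which uses the Markov structure $C\to X_0\to X_t$, i.e.\ that the forward noise is independent of $C$ and $X_0$. I would handle this by imposing standard smoothness and decay assumptions on $p_t$, or by first reducing to the affine drift $f=a(t)x$ used in practice. In the affine case $X_t=\alpha_tX_0+\sigma_t\epsilon$, and the identity follows from the classical Gaussian-channel de Bruijn identity after rescaling.
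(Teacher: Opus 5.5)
Your proposal is correct and follows essentially the same route as the paper. The appendix derives $\frac{d}{dt}H(X_t)=\frac12 g^2(t)J(X_t)+\mathbb{E}[\nabla\cdot f_t]$ from the Fokker--Planck equation and applies it conditionally to get $\frac{d}{dt}I(X_t;Y)=-\frac12 g^2(t)J(X_t;Y)$ for $Y\in\{C,X_0\}$. It then sets $L'(t^*)=0$, which is your drift-term cancellation done per mutual-information term rather than all at once in $L$. Your explicit conditions ($g(t^*)>0$, vanishing boundary terms, and the conditional laws obeying the same SDE because the noise is independent of $(X_0,C)$) are points the paper leaves implicit.
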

\begin{proof}
    See Appendix \ref{proof1}.
\end{proof}
We now specialize the result to an 
$n$-dimensional Ornstein–Uhlenbeck (OU) diffusion process
\begin{align}
    \rd X_t=f(t) X_t \rd t+g(t) \rd w_t,
\end{align}
whose solution is 
\begin{align}\label{eq:OU solution}
    X_t= e^{\int_0^tf(s)\rd s} X_0 + \int_0^t e^{\int_u^tf(s)\rd s} \g(u) \rd w_u. 
\end{align}
After reparameterize, $X_t$ can be written as
\begin{align}\label{eq:reparameterize}
    X_t=e^{\int_0^tf(s)\rd s} X_0 + \sqrt{m(t)} Z, 
\end{align}
where $Z\sim \mathcal{N}(0,I)$, $m(t)
=
\int_0^t
\exp\left(2\int_s^t f(r)\mathrm dr\right)
g^2(s)\mathrm ds$.
Furthermore, assuming a learned score function $s_\theta(x,t,c)\approx \nabla\log p_t(x|c)$, we approximate the Fisher information by
\begin{align}
J(X_t|C) \approx \mathbb E_{x_t,c\sim p_t(x,c)}\left[\|s_\theta(x_t,t,c)\|^2\right],\quad
J(X_t)\approx \mathbb E_{x_t,c\sim p_t(x,c)}\left[\|s_\theta(x_t,t,\varnothing)\|^2\right].
\end{align}
For such processes, the conditional Fisher information admits a closed-form expression: $J(X_t|X_0) = \frac{n}{m(t)}.$ 
Substituting these expressions into Theorem \ref{thm:cfg_ib}, the optimality condition at $t^*$ becomes
\begin{align}\label{eq:IBDiff}
    \beta (\mathbb E[\|s_\theta(x_{t^*},t^*,c)\|^2]-\mathbb E[\|s_\theta(x_{t^*},t^*,\varnothing)\|^2])=\frac{n}{m(t^*)}-\mathbb E[\|s_\theta(x_{t^*},t^*,\varnothing)\|^2].
\end{align}
Equation \ref{eq:IBDiff} characterizes the diffusion time at which the information bottleneck objective achieves a balance between conditional relevance and information compression. In principle, this condition provides a criterion for selecting an appropriate time scale at which CFG should be activated.
\subsection{Extension and limitation on Flow Models}

Our information bottleneck analysis can be extended to a class of stochastic flow/interpolation models.
In particular, consider
\begin{align}\label{eq:flow}
    X_t=\alpha_t X_0+\sigma_t X_1,\qquad t\in[0,1],
\end{align}
where \(X_0\) denotes the target distribution, \(X_1\) denotes the prior distribution, and
\(\alpha_0=\sigma_1=1,\ \alpha_1=\sigma_0=0\).
Assume that \(X_1\sim\mathcal N(0,I)\) is independent of \(X_0\).

\begin{theorem}[Information Bottleneck Optimality for Flow Models] \label{thm:flow_ib}
Assume that $X_t$ follows the forward SDE \eqref{eq:flow}, and that the mutual information in \eqref{eq CFG IB} are differentiable with respect to $t$. Assume that by choosing an appropriate $\beta$, the minimizer $t^*$ of \eqref{eq CFG IB} lies in the interior of the interval $(0, 1)$.Then $t^*$ satisfies
\begin{align}
    \beta J(X_{t^*}; C)=J(X_{t^*}; X_0).
\end{align}
Equivalently,
\begin{align}
    \beta\left(J(X_{t^*}\mid C)-J(X_{t^*})\right)
    =
    J(X_{t^*}\mid X_0)-J(X_{t^*}).
\end{align}
\end{theorem}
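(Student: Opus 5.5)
The approach mirrors the argument for Theorem~\ref{thm:cfg_ib}. We differentiate $L(t)=I(X_0;X_t)-\beta I(X_t;C)$ in $t$, set $L'(t^*)=0$ at the interior minimizer, and express each derivative of mutual information through Fisher information via a de Bruijn-type identity. The interpolation \eqref{eq:flow} is not a diffusion started at $X_0$ with a fixed time scale. We therefore first rewrite it as a Gaussian channel. For $t\in(0,1)$ set $\tilde X_t = X_t/\alpha_t = X_0 + (\sigma_t/\alpha_t) X_1$. This is an invertible deterministic map of $X_t$, so $I(X_0;X_t)=I(X_0;\tilde X_t)$ and $I(C;X_t)=I(C;\tilde X_t)$. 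The variable $\tilde X_t$ is $X_0$ corrupted by independent Gaussian noise of variance $s(t)=\sigma_t^2/\alpha_t^2$. Assuming $s$ is a monotone differentiable function of $t$ (true for standard schedules), we reparametrize by $s$.

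The key step is the de Bruijn identity along the Markov chain $C\to X_0\to \tilde X$, where $\tilde X^{(s)}=X_0+\sqrt{s}Z$. It gives $\frac{d}{ds}h(\tilde X^{(s)})=\tfrac12 J(\tilde X^{(s)})$ and $\frac{d}{ds}h(\tilde X^{(s)}\mid C)=\tfrac12 J(\tilde X^{(s)}\mid C)$, where the conditional version applies de Bruijn to each $p(\cdot\mid c)$ and averages. Also $h(\tilde X\mid X_0)=\tfrac n2\log(2\pi e s)$, so $\frac{d}{ds}h(\tilde X\mid X_0)=\tfrac{n}{2s}=\tfrac12 J(\tilde X\mid X_0)$. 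Hence
\begin{align*}
\frac{d}{ds}I(X_0;\tilde X)=\tfrac12\big(J(\tilde X)-J(\tilde X\mid X_0)\big),\qquad
\frac{d}{ds}I(C;\tilde X)=\tfrac12\big(J(\tilde X)-J(\tilde X\mid C)\big).
\end{align*}
By the chain rule, $L'(t)=s'(t)\cdot\tfrac12\big[-J(\tilde X;X_0)+\beta J(\tilde X;C)\big]$. At an interior minimizer with $s'(t^*)\neq 0$, this yields $\beta J(\tilde X_{t^*};C)=J(\tilde X_{t^*};X_0)$.

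It remains to transfer this back to $X_t$. Under the linear scaling $X_t=\alpha_t\tilde X_t$, every score is multiplied by $1/\alpha_t$, so every Fisher information (conditional or not) is multiplied by $1/\alpha_t^2$. Both sides of the balance equation scale by the same factor, so the identity holds verbatim for $X_{t^*}$. Expanding $J(X;Y)=J(X\mid Y)-J(X)$ gives the second stated form.

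I expect the main obstacle to be the technical justification of the de Bruijn step: exchanging derivative and integral, and the vanishing of boundary terms in the integration by parts. This needs regularity assumptions, such as finite second moments of $X_0$ and smoothness of the densities for $s>0$, which the Gaussian smoothing largely supplies. A second point is the nondegeneracy $s'(t^*)\ne0$, which holds for monotone schedules. The endpoints $t=0,1$, where the rescaling breaks down, are excluded by the interiority hypothesis.
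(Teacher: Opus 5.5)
Your argument is correct and reaches the same first-order condition as the paper, but through a different reduction.

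\textbf{The paper's route.} It never rescales. Lemma~\ref{thm:flow_fisher} embeds the interpolation into an OU-SDE with $f(t)=\dot\alpha_t/\alpha_t$ and $g^2(t)=2\sigma_t\dot\sigma_t-2\dot\alpha_t\sigma_t^2/\alpha_t$. It then invokes the Fokker--Planck entropy formula with drift (Lemma~\ref{entropy eq}). There the $\mathbb E[\nabla\cdot f_t]$ terms cancel between $H(X_t)$ and $H(X_t\mid Y)$, which gives $\frac{d}{dt}I(X_t;Y)=-\frac12 g^2(t)J(X_t;Y)$ directly in the original variable (Lemma~\ref{thm Fisher}).

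\textbf{Your route.} You remove the drift by dividing by $\alpha_t$ and change time to the noise variance $s=\sigma_t^2/\alpha_t^2$. You then apply the classical de Bruijn identity and return via the scaling law $J(X_t;\cdot)=\alpha_t^{-2}J(\tilde X_t;\cdot)$.

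The two computations coincide, because $g^2(t)=\alpha_t^2 s'(t)$ and hence $\frac12 g^2(t)J(X_t;Y)=\frac12 s'(t)J(\tilde X_t;Y)$.

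\textbf{What each buys.} Your version uses only the Gaussian-channel structure at each fixed $t$, with no auxiliary SDE. That SDE's existence tacitly requires $g^2\ge 0$, i.e.\ $s$ nondecreasing. Your version also states explicitly the nondegeneracy $s'(t^*)\neq 0$, which the paper uses silently when it cancels the common factor $\frac12 g^2(t^*)$. The paper's version instead reuses exactly the lemma already proved for Theorem~\ref{thm:cfg_ib}. It also gives an explicit rate $-\bigl(\sigma_t\dot\sigma_t-\frac{\dot\alpha_t}{\alpha_t}\sigma_t^2\bigr)J(X_t;Y)$ in the original coordinates.

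Both arguments rest on the same regularity needed for de Bruijn: interchanging derivative and integral, and vanishing boundary terms. The paper does not address this, and you at least flag it.
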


\begin{proof}
    See Appendix~\ref{proof2}.
\end{proof}

Since the conditional distribution is 
$X_t|X_0 \sim \mathcal N(\alpha_t X_0,\sigma_t^2 I),$
we have $J(X_t|X_0)=\frac{n}{\sigma_t^2}.$
Therefore, the optimality condition at \(t^*\) becomes
\begin{align}\label{eq:IBDiff_flow}
    \beta \Big(\mathbb E[\|s_\theta(x_{t^*},t^*,c)\|^2]-\mathbb E[\|s_\theta(x_{t^*},t^*,\varnothing)\|^2]\Big)
    =
    \frac{n}{\sigma_{t^*}^2}
    -
    \mathbb E[\|s_\theta(x_{t^*},t^*,\varnothing)\|^2].
\end{align}

\paragraph{Limitation.}
Theorem~\ref{thm:flow_ib} applies only to \emph{stochastic} interpolation paths with an independent Gaussian endpoint.
It does not directly cover deterministic couplings, such as $n$-rectified flow \cite{Liu2022FlowSA} for $n\ge2$, where \(X_t\) is a deterministic function of \(X_0\) and the standard continuous information bottleneck may degenerate.
Nevertheless, \eqref{eq:IBDiff_flow} can still be used as a practical heuristic in such models, although without the same rigorous information-theoretic justification. 
As for how to estimate score on Rectified Flow, a detailed discussion is provided in Appendix~\ref{appen:discuss}.

\subsection{Justification via Hyperparameter Rescale}
\begin{figure*}[!hbt]
    \centering
    \includegraphics[width=\linewidth]{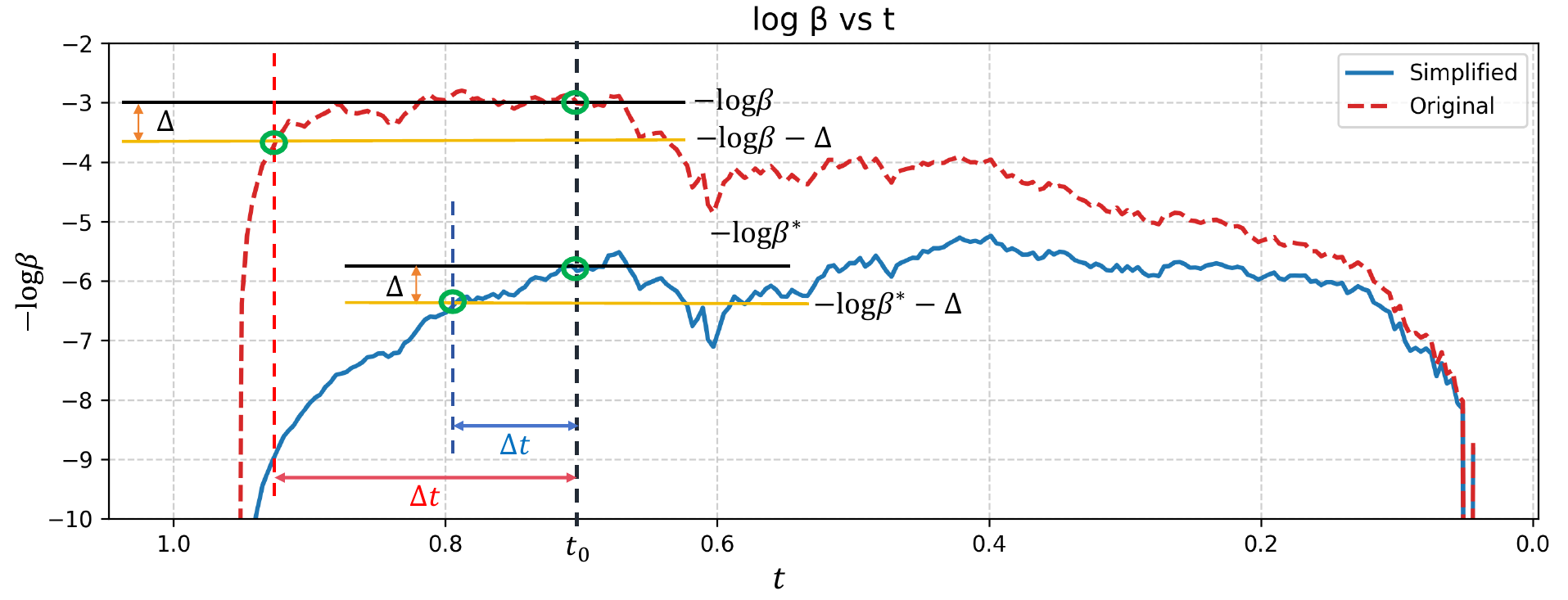}
    \caption{Sensitivity of the solution $t^*$ to the hyperparameter in the original and simplified formulations.
The original parameterization (\textcolor{red}{red $\Delta t$}) exhibits a much flatter $\log \beta$--$t$ relationship than the rescaled version (\textcolor{blue}{blue $\Delta t$}), indicating higher sensitivity to perturbations. Notably, for a fixed perturbation in $\log\beta$, the resulting deviation in 
$t$ ($\Delta t$) is significantly larger in the original formulation. Experiments are conducted on SiT-XL/2.}
        \label{fig:solution}
\end{figure*}
Despite its conceptual appeal, directly solving \eqref{eq:IBDiff} in practice is challenging. The expectations in \eqref{eq:IBDiff} are estimated using finite mini-batches during sampling, and involve squared score magnitudes, which are known to exhibit high variance. As a result, the empirical estimates of the Fisher information terms can be noisy and biased, especially at early diffusion times where score norms vary rapidly.
To mitigate this issue, we propose a parameter renormalization perspective to solve the problem. Consider the original bottleneck objective (\eqref{eq:original}) and the simplified objective (\eqref{eq:simplified}) defined with a surrogate scale $\beta^*$:

\begin{align}
    \text{Original: } & \quad \beta \cdot \Delta \mathcal{J}(t) = \frac{n}{m(t)} - \mathbb E[\|s_\theta(x_{t},t,\varnothing)\|^2], \label{eq:original} \\
    \text{Simplified: } & \quad \beta^* \cdot \Delta \mathcal{J}(t) = \frac{n}{m(t)}, \label{eq:simplified}
\end{align}
where $\Delta \mathcal{J}(t) = \mathbb E[\|s_\theta(x_{t},t,c)\|^2] - \mathbb E[\|s_\theta(x_{t},t,\varnothing)\|^2]$ denotes the conditional guidance gain.

We define the ratio of the marginal Fisher information to the noise kernel Fisher information as $\eta(t)$:
\begin{align}
    \eta(t) := \frac{\mathbb E[\|s_\theta(x_{t},t,\varnothing)\|^2]}{n/m(t)} = \frac{J(X_t)}{J(X_t|X_0)}.
\end{align}
By substituting $\eta(t)$ into ~\eqref{eq:original}, we can rewrite the original equilibrium condition as:
\begin{align}
    \beta \cdot \Delta \mathcal{J}(t) = \frac{n}{m(t)} (1 - \eta(t)) 
    \iff \left( \frac{\beta}{1 - \eta(t)} \right) \cdot \Delta \mathcal{J}(t) = \frac{n}{m(t)}.
\end{align}
Comparing this with ~\eqref{eq:simplified}, we establish a direct mapping between the original hyperparameter $\beta$ and the simplified hyperparameter $\beta^*$:
\begin{align}
    \beta^* = \frac{\beta}{1 - \eta(t)}.
    \label{eq:beta_mapping}
\end{align}

To further analyze the stability of the two formulations, we examine the sensitivity of the solution $t^*$ with respect to perturbations in the hyperparameter.
Figure~\ref{fig:solution} shows the relationship between $\log \beta$ and the corresponding solution $t$ for the original objective, as well as $\log \beta^*$ for the simplified objective. We observe that the original parameterization exhibits a much flatter slope, which suggests that the inverse mapping from $\beta$ to $t^*$ is poorly conditioned, making the root-finding procedure unstable in practice. 
In contrast, the simplified formulation yields a significantly steeper curve, implying improved conditioning and substantially reduced sensitivity to hyperparameter perturbations.
As a result, it enables more robust and reliable tuning in practice.

Thus, our objective shifts from identifying the exact physical $\beta$ to determining the \textit{effective} scale $\beta^*$ that satisfies the simplified bottleneck condition. Solving the simplified problem with $\beta^*$ yields a solution $t^*$ that is asymptotically consistent with the original problem under $\beta$. Since $\beta$ is already an empirical hyperparameter tuned for sample quality, absorbing the scaling factor $(1-\eta(t))^{-1}$ into the selection of $\beta$ incurs no loss of generality. This allows us to use the computationally accurate \eqref{eq:simplified} as a proxy for the exact information bottleneck.

%% file: sec/method.tex
\section{Method}
\begin{figure*}[!hbt]
    \centering
    \includegraphics[width=\linewidth]{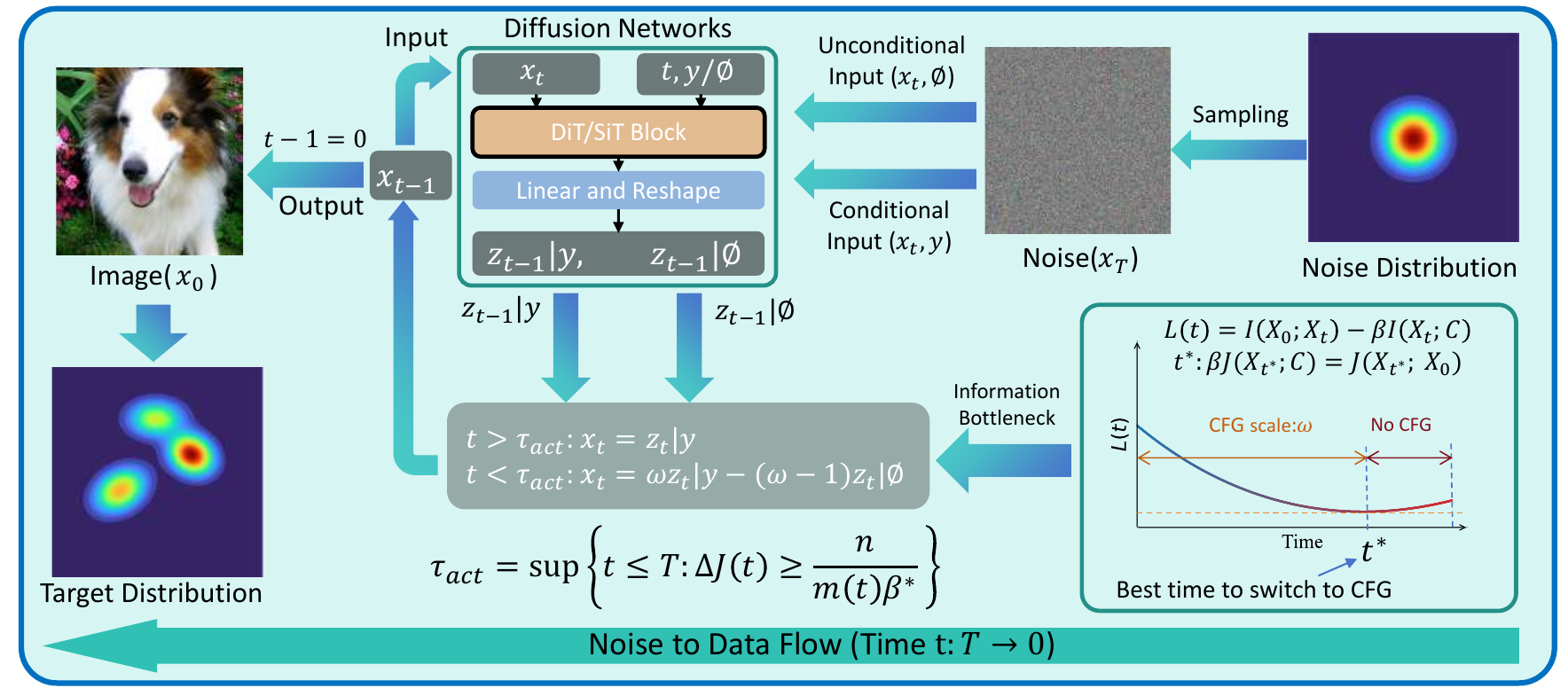}
    \caption{Overview of our adaptive CFG activation pipeline. Starting from noise, the diffusion model jointly evaluates conditional and unconditional scores at each reverse step, computes a guidance-gain statistic $\Delta \mathcal J(t)$, and activates classifier-free guidance only after the stopping criterion is met. The illustration uses a DiT/SiT-style diffusion backbone.}
        \label{fig:noise2img}
\end{figure*}

Motivated by the theoretical and empirical analysis in Section~\ref{theory}, we propose a practical procedure to determine when classifier-free guidance should be activated. Unlike fixed schedules, our method operates dynamically on a per-batch basis.
Rather than explicitly solving the equilibrium condition in continuous time, we construct a stochastic process from mini-batch statistics of the learned score function. We then activate guidance at the first time this process crosses a prescribed threshold. An overview of the proposed sampling procedure is shown in Figure~\ref{fig:noise2img}. The figure illustrates how our method augments the standard reverse diffusion process with a batch-wise activation rule: at each step, the model jointly evaluates conditional and unconditional scores, estimates the guidance gain from their batch statistics, and turns on CFG only after the estimated gain exceeds the threshold implied by our information-bottleneck analysis.

At each diffusion step $t$, we perform a single forward pass by concatenating conditional and unconditional inputs within the same mini-batch, yielding score estimates $s_\theta(x_t,t,c)$ and $s_\theta(x_t,t,\varnothing)$.
This allows us to define the batchwise estimators
\begin{align}
    \widehat{\mathcal J}_{\mathrm{cond}}(t)
    = \frac{1}{B}\sum_{i=1}^B \|s_\theta(x_t^{(i)},t,c^{(i)})\|^2, \quad 
    \widehat{\mathcal J}_{\mathrm{uncond}}(t)
    = \frac{1}{B}\sum_{i=1}^B \|s_\theta(x_t^{(i)},t,\varnothing)\|^2,
\end{align}
which serve as Monte Carlo approximations to the conditional and marginal Fisher information, respectively.
We then consider the stochastic process
\begin{align}\label{eq:Delta}
    \Delta \widehat{\mathcal J}(t)=\widehat{\mathcal J}_{\mathrm{cond}}(t)-\widehat{\mathcal J}_{\mathrm{uncond}}(t),
\end{align}

which acts as a proxy for the instantaneous guidance gain predicted by the information bottleneck condition.
Since the generative process of diffusion models proceeds \emph{backward in time} from the noise prior at $T$ to the data distribution at $0$, it is natural to perform the threshold check in the same reverse temporal order. Specifically, we perform a backward scan from the final diffusion step $T$ to $0$ and select the \emph{latest} time step (i.e., the first encountered in the reverse loop) where the guidance gain exceeds the threshold.
Formally, this defines a \emph{backward stopping time}
\[
\tau_{\mathrm{activate}}
=
\sup\left\{
t \le T:\;
 \Delta \widehat{\mathcal J}(t) \ge \frac{n}{m(t)\beta^*}
\right\}.
\]

Guidance is then activated for all subsequent steps in the forward progression of the generative process (i.e., for all $t \le \tau_{\mathrm{activate}}$). 
Algorithm~\ref{alg:cfg-backward} outlines our adaptive guidance activation procedure.

\begin{tcolorbox}[
  enhanced,
  breakable,
  colback=gray!8,
  colframe=black!70,
  boxrule=1pt,
  arc=3pt,
  left=8pt, right=8pt, top=6pt, bottom=6pt,
]
\noindent\textbf{\textcolor{orange!80!red}{Discussion:} Why activate CFG near the information bottleneck?}

\medskip
Activating classifier-free guidance (CFG) near the information bottleneck balances data fidelity and conditional relevance.  The information bottleneck objective
\[
  L(t) = I(X_0;\, X_t) - \beta\, I(X_t;\, C)
\]
captures the balance between retaining information from the data distribution
($I(X_0; X_t)$) and incorporating information from the conditioning variable
($I(X_t; C)$). The optimal time for guidance activation corresponds to the point
where this balance reaches a critical threshold.

\medskip
In the early stages of the reverse diffusion process ($t \approx T$), the sample is
dominated by noise, and both $I(X_0; X_t)$ and $I(X_t; C)$ are small. At this point,
applying CFG would amplify noise rather than useful structure.
As the process progresses, $I(X_0; X_t)$ increases, while $I(X_t; C)$ remains weak.
This causes the guidance signal to be less effective. However,
once $I(X_t; C)$ starts to dominate, $L(t)$ reaches a minimum at a critical point $t^*$. Here, data fidelity and conditional information are optimally balanced for guidance.

\medskip
Consequently, the guidance term $\omega\,(\epsilon_\theta(x_t, t, c) - \epsilon_\theta(x_t, t, \varnothing))$
in~\eqref{CFG} acts as a control signal that is most effective at $t^*$. Activating CFG at the minimum of
$L(t)$ ensures the model leverages the conditional signal reliably without compromising the underlying data structure.
\end{tcolorbox}
\input{tabel/algorithm}
\paragraph{Relationship to Interval Guidance \cite{kynkaanniemi2024applying}.}
Our framework recovers Interval Guidance in the large-batch regime. As the batch size grows to $\infty$, the mini-batch score statistics in our thresholding rule concentrate, causing the random activation time $\tau_{\mathrm{activate}}$ to approach a deterministic boundary common to all samples. This fixed boundary can be identified with the activation endpoint $t_h$ used in Interval Guidance. Compared with Interval Guidance, however, our method does not require manually prescribing this boundary in finite-batch settings; instead, it infers the activation time adaptively from mini-batch score statistics, and the resulting rule is grounded in our information-bottleneck analysis. 
Furthermore, we explore the statistics properties of $\tau_{\mathrm{activate}}$ in Appendix \ref{appen:exp}.

%% file: tabel/algorithm.tex
\begin{center} 
    \scalebox{1.0}{%
        \begin{minipage}{1.0\linewidth} 
            \begin{algorithm}[H]
    \caption{Reverse Diffusion with Adaptive Guidance Activation}
    \label{alg:cfg-backward}
    \begin{algorithmic}[1]
        \Require $\{\x_T^{(i)}\}_{i=1}^B \stackrel{\text{i.i.d.}}{\sim} \mathcal{N}(0, \rmI_n)$, $\omega, \beta^* > 0$, $\tau_{\rm back} = \text{undefined}$.
        \For{$t = T$ {\bfseries to} $1$}
            \State Calculate noise predictions $\epscond(\x_t^{(i)}), \epsnullnc(\x_t^{(i)})$.
            \State $\frac{m(t)}{n}\Delta \widehat{\mathcal J}(t) \gets \frac{1}{Bn}\sum_{i=1}^B \left(\|\epscond(x_t^{(i)})\|^2 - \|\epsnullnc(x_t^{(i)})\|^2\right)$ \textcolor{blue}{\Comment{ Compute for current batch}}
            
            \If{$\tau_{\rm back}$ is undefined}
                \If{$\Delta \widehat{\mathcal J}(t) \ge \frac{n}{\beta^* m(t)}$}
                    \State $\tau_{\rm back} \gets t$
                \EndIf
            \EndIf
            
            \If{$\tau_{\rm back}$ is defined}
                \State $\epscfgnc(\x_t) \gets \epsnullnc(\x_t) + \omega (\epscond(\x_t) - \epsnullnc(\x_t))$, update $x_{t-1}$
            \Else
                \State $\epscfgnc(\x_t) \gets \epscond(\x_t)$, update $x_{t-1}$
            \EndIf
        \EndFor
        \State {\bfseries return} $x_0$
    \end{algorithmic}
\end{algorithm}
        \end{minipage}
    }
\end{center}

%% file: sec/result.tex
\section{Results}\label{experiments results}
\subsection{Experimental Setup}
\noindent \textbf{Datasets and Models.}
We evaluate our method on class-conditional image generation and text-to-image synthesis. We conduct experiments on ImageNet \citep{5206848} and MS-COCO \citep{lin2014microsoft}. All models are based on advanced diffusion  formula, including Flow Matching \cite{Lipman2022FlowMF} (SiT \citep{ma2024sit, yu2025repa}), DDPM \cite{Ho2020DenoisingDP} (DiT \citep{Peebles2022DiT}) and Rectified Flow \cite{Liu2022FlowSA} (Stable Diffusion \citep{rombach2022high}, FLUX \citep{labs2025flux1kontextflowmatching}), using official pre-trained weights.   

\noindent \textbf{Evaluation Metrics and Implementation Details.}
We use FID \citep{heusel2017gans}, sFID \cite{ding2022continuous} and IS \citep{salimans2016improved} to assess both fidelity and conditional alignment. We use FID \citep{heusel2017gans}, Clip score \citep{Radford2021LearningTV}, HPSv2 \citep{wu2023human} and Aesthetic score \cite{improvedaestheticpredictor} for text-to-image tasks.
All experiments are run on NVIDIA GPUs. 
\subsection{Experimental Results}

\input{tabel/merge}
\noindent \textbf{Results on Class-Conditional Tasks.}
In Table \ref{tab:fid}, we evaluate our method on both DiT-XL/2 and SiT-XL/2 models, tested on the ImageNet benchmarks (256 and 512 resolutions) for class-conditional image generation. Our method demonstrates consistent improvements across all metrics when applied to both DiT and SiT models, especially in FID and IS scores. 

For DiT-XL/2, our method outperforms the baseline across both resolutions, with a notable FID score of \textbf{2.06} on ImageNet-256 and \textbf{6.49} on ImageNet-512, showing strong performance. Similarly, for SiT-XL/2, our method yields the best FID score of \textbf{1.41} for SDE sampler and \textbf{1.47} for ODE sampler, surpassing the baseline and achieving a significant boost in IS scores.
These results highlight the versatility and effectiveness of our approach, which consistently enhances the performance of both DiT and SiT models across different resolutions and datasets.

\input{tabel/T2I}
\noindent\textbf{Results on T2I Tasks.}
As shown in Table~\ref{tab:different_guidance_compare},
our method achieves consistent improvements
over standard CFG on SD3 in terms of FID,
HPSv2.1, CLIP Score, and Aesthetic Score.
On FLUX.1 at $256\times256$ resolution,
our method improves FID and HPSv2.1,
with a slight reduction in CLIP Score.

We further evaluate FLUX.1 at
$1024\times1024$ resolution with
$\omega=7.5$. Our method achieves an
average HPSv2.1 score of 30.73 with
$B=16$, compared with 20.13 for
standard CFG. Importantly, it also
achieves 30.29 under single-sample
inference ($B=1$), demonstrating
its applicability without requiring
multiple generated samples.

\noindent \textbf{Ablation Study.}
As shown in Table~\ref{tab:ablation}, we investigate
the impact of batch size on generation performance.
Our method consistently outperforms the baseline
across all tested batch sizes
($B\in\{4,8,16,32\}$), achieving lower FID scores
in every setting. To further assess the
reproducibility of these improvements, we conduct
multi-seed experiments. As reported in
Table~\ref{tab:multi_seed}, our method achieves
an FID of $4.98\pm0.06$ and an IS of
$298.1\pm2.6$, consistently outperforming
Interval Guidance across three random seeds.
We further analyze the distribution of the
batch-wise activation time $\tau$ under different
batch sizes. As shown in
Table~\ref{tab:stopping_batch}, the mean activation
time remains stable at approximately $0.68$,
while its standard deviation decreases as the
batch size increases. This observation is
consistent with the concentration of batch-level
statistics and supports the connection between
our adaptive criterion and fixed-interval guidance
in the large-batch regime.
Finally, we compare our Fisher-derived
score-norm difference with an alternative
activation statistic based on the
conditional--unconditional score-discrepancy MSE.
As shown in Table~\ref{tab:statistic_ablation},
our criterion achieves better FID and IS under
the tested configurations, suggesting its
practical effectiveness in identifying a
suitable guidance activation point.
Additional results are provided in
Appendix~\ref{appen:exp}.

\begin{table*}[t]
\centering

\begin{minipage}[t]{0.40\textwidth}
    \vspace{0pt}
    \centering
    \caption{\textbf{Multi-seed stability.}
    FID and IS over three seeds on SiT-XL/2.}
    \label{tab:multi_seed}
    \vspace{2pt}
    \scriptsize
    \setlength{\tabcolsep}{2pt}
    \resizebox{\linewidth}{!}{%
    \begin{tabular}{llcccc}
        \toprule
        Method & Metric & Seed 0 & Seed 1
        & Seed 2 & Mean $\pm$ Std. \\
        \midrule
        \multirow{2}{*}{Interval}
        & FID$\downarrow$
        & 5.09 & 5.25 & 5.19 & $5.18\pm0.08$ \\
        & IS$\uparrow$
        & 287.4 & 280.9 & 284.1 & $284.1\pm3.3$ \\
        \midrule
        \multirow{2}{*}{Ours}
        & FID$\downarrow$
        & 4.92 & 5.05 & 4.98
        & $\mathbf{4.98\pm0.06}$ \\
        & IS$\uparrow$
        & 299.7 & 295.1 & 299.5
        & $\mathbf{298.1\pm2.6}$ \\
        \bottomrule
    \end{tabular}%
    }
\end{minipage}
\hfill
\begin{minipage}[t]{0.24\textwidth}
    \vspace{0pt}
    \centering
    \caption{\textbf{Stopping-time statistics.}
    Activation time across batch sizes.}
    \label{tab:stopping_batch}
    \vspace{1.5pt}
    \scriptsize
    \setlength{\tabcolsep}{3pt}
    \resizebox{\linewidth}{!}{%
    \begin{tabular}{ccc}
        \toprule
        Batch & Mean ($\tau$) & Std. ($\tau$) \\
        \midrule
        $B=8$  & 0.68 & 0.107 \\
        $B=16$ & 0.68 & 0.086 \\
        $B=32$ & 0.68 & 0.054 \\
        \bottomrule
    \end{tabular}%
    }
\end{minipage}
\hfill
\begin{minipage}[t]{0.34\textwidth}
    \vspace{0pt}
    \centering
    \caption{\textbf{Activation statistic ablation.}
    Comparison of switching criteria.}
    \label{tab:statistic_ablation}
    \vspace{2pt}
    \scriptsize
    \setlength{\tabcolsep}{3pt}
    \resizebox{\linewidth}{!}{%
    \begin{tabular}{lcc}
        \toprule
        Activation Statistic
        & FID$\downarrow$ & IS$\uparrow$ \\
        \midrule
        Baseline & 5.09 & 287.4 \\
        Score-discrepancy MSE & 5.14 & 287.5 \\
        Fisher norm difference (Ours)
        & \textbf{4.92} & \textbf{299.7} \\
        \bottomrule
    \end{tabular}%
    }
\end{minipage}

\end{table*}

\noindent \textbf{Qualitative Comparison.} 
We present visual comparisons of text-to-image generation results using Stable Diffusion 3 and Flux.1 in Figure \ref{fig:qualitive_multi}. The red text in the prompts indicates specific semantic details that the baseline model fails to generate correctly. The regions highlighted by boxes illustrate the differences between the methods. As shown, our approach successfully aligns with these complex semantic requirements, generating details that are missing or incorrect in the baseline results. 

\begin{figure*}[!htb]
    \centering
    \includegraphics[width=\linewidth]{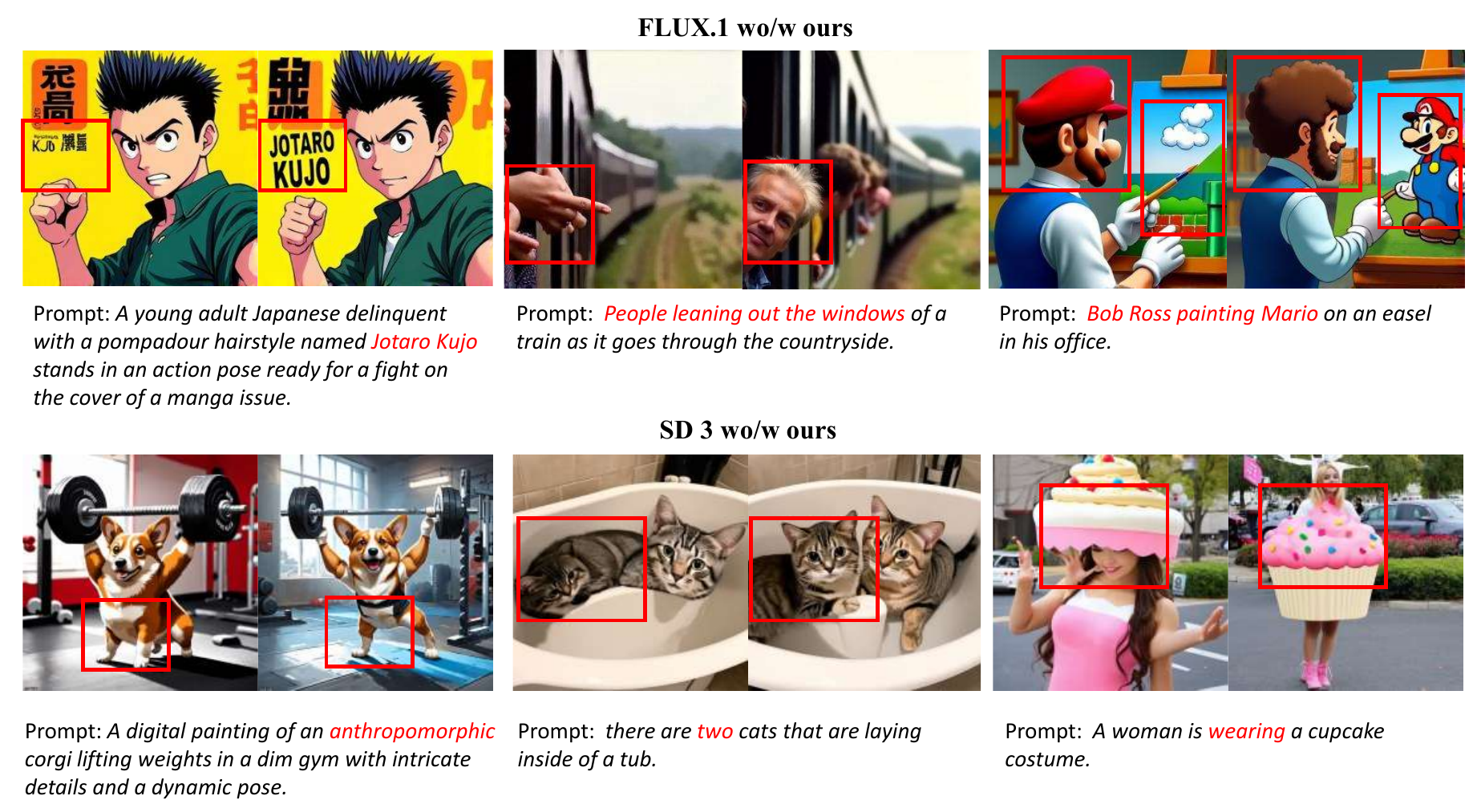}
    
    \caption{\textbf{Qualitative Comparison.} Visual results on text-to-image tasks using Stable Diffusion 3 and Flux.1. The \textcolor{red}{red text} in the prompts highlights specific details that the baseline fails to generate.}
    \label{fig:qualitive_multi}
\end{figure*}

%% file: tabel/merge.tex
\begin{table}[!hbt]
\centering
\begin{minipage}{0.62\textwidth}
\centering
\caption{\textbf{Quantitative comparison on Class-Conditional tasks.} Comparison of different evaluation metrics on Class-Conditional ImageNet datasets with different diffusion architectures. \textbf{Bold} indicates the best result, and \underline{underlined} indicates the second best.}
\label{tab:fid}
\vspace{-0mm}
\scalebox{0.69}{
\begin{tabular}{lcccccc}
\toprule
\multicolumn{7}{l}{{\bf\normalsize ImageNet (Default $B=32$}) } \\
\toprule
\multirow{1}{*}{\bf{Model-256, 50k samples, 250 inference timesteps}} 
& FID$\downarrow$ & IS$\uparrow$ & sFID$\downarrow$  \\
\arrayrulecolor{black}\midrule
DiT-XL/2 ($\omega=1.5$, SDE sampler)     & \underline{2.29} & 276.8 & 4.6  \\
DiT-XL/2 ($\omega=1.6$)     & 2.71 & \textbf{303.2} & 4.5  \\
\textbf{DiT-XL/2 + Ours}($\omega= 1.6, \log \beta^*=8.0$)       & \textbf{2.06} & \underline{280.9} & \textbf{4.5}  \\
\arrayrulecolor{gray}\cmidrule(lr){1-7}
SiT-XL/2 (REPA)($\omega=1.35$, SDE)     & 1.80 & 284.0 & \textbf{4.5} \\
SiT-XL/2 (REPA, Interval) ($\omega=1.8, t_l=0,t_h=0.7$)     & \underline{1.42} & 305.7 &\underline{4.7} \\
SiT-XL/2 (REPA) ($\omega=1.9$)     & {4.67} & \textbf{388.3} & 5.5   \\
\textbf{SiT-XL/2 (REPA) + Ours} ($\omega=1.9, \log \beta^*=5.9,B=48$)     & \textbf{1.41} & \underline{306.6} & \underline{4.7}   \\
\arrayrulecolor{gray}\cmidrule(lr){1-7}
SiT-XL/2 (REPA)($\omega=1.8, t_h=0.7$, ODE)     & 1.56 & 283.0 & 4.6 \\
\textbf{SiT-XL/2 (REPA) + Ours} ($\omega=1.8, \log \beta^*=6.5$, ODE)     & \textbf{1.47} & \textbf{291.2} & \textbf{4.4}\\
\arrayrulecolor{black}\midrule
\multirow{1}{*}{\bf{Model-512, 10k samples, 100 inference timesteps}}
\\
\arrayrulecolor{black}\midrule
DiT-XL/2 ($\omega=1.5$, SDE)     & \underline{6.81} & 229.5 & \underline{20.0} \\
DiT-XL/2 ($\omega=1.7$)     & 7.08 & \textbf{275.5} & 19.2 \\
\textbf{DiT-XL/2 + Ours}($\omega= 1.7, \log \beta^*=8.5, B=16$)       & \textbf{6.49} & \underline{241.4} & \underline{20.0} \\
\arrayrulecolor{black}\bottomrule
\end{tabular}
}
\end{minipage}%
\hfill
\begin{minipage}{0.36\textwidth}
\centering
\caption{\textbf{Ablation comparison.} Comparison of different evaluation metrics on Class-Conditional ImageNet datasets with different batch size $B$. Unless otherwise specified, all other settings are identical to those in Table \ref{tab:fid}.}


\label{tab:ablation}
\vspace{-0mm}
\scalebox{0.97}{
\begin{tabular}{lcccccc}
\toprule
\multicolumn{7}{l}{\textbf{\normalsize ImageNet-256}} \\
\multicolumn{7}{l}{\textbf{\normalsize 10k samples,}} \\
\multicolumn{7}{l}{\textbf{\normalsize 50 inference timesteps}} \\
\toprule
\multirow{1}{*}{\bf{SiT-XL/2}} 
& FID$\downarrow$ & IS$\uparrow$ \\
\arrayrulecolor{black}\midrule
Baseline (SDE)    & 5.09 & 287.4 \\
\arrayrulecolor{gray}\cmidrule(lr){1-7}
Ours($B=\textcolor{red}{32}$)     & 4.92 & 299.7   \\
Ours($B=\textcolor{red}{16}$)     & 4.89 & 296.1   \\
Ours($B=\textcolor{red}{8}$)     & 4.96 & 294.6   \\
Ours($B=\textcolor{red}{4}$)     & 4.95 & 294.9   \\
\arrayrulecolor{black}\bottomrule
\end{tabular}
}
\end{minipage}
\end{table}

%% file: tabel/T2I.tex
\begin{table*}[!ht]
    \centering
    \footnotesize
    \setlength{\tabcolsep}{1pt}
    \caption{\textbf{Quantitative comparison on text-to-image tasks.}
    Quantitative results on SD3 and FLUX.1 under different
    guidance configurations. We report FID, HPSv2.1,
    CLIP Score, and Aesthetic Score where available.
    For FLUX.1 at $1024\times1024$, we additionally
    evaluate single-sample inference ($B=1$).
    ``--'' denotes unavailable results.}
    \resizebox{\textwidth}{!}{%
    \begin{tabular}{llcccccccc}
        \toprule
        \multirow{2}{*}{Model}
        & \multirow{2}{*}{Approach}
        & \multirow{2}{*}{FID$\downarrow$}
        & \multicolumn{5}{c}{HPS v2.1}
        & \multirow{2}{*}{CLIP$\uparrow$}
        & \multirow{2}{*}{Aesthetic$\uparrow$} \\
        &&
        & paintings$\uparrow$
        & photo$\uparrow$
        & concept-art$\uparrow$
        & anime$\uparrow$
        & average$\uparrow$
        && \\
        \midrule
        \multirow{2}{*}{SD 3 (1024, 28 steps)}
        & CFG($\omega=7.5, B=4$)
        & 25.98 & 30.99 & 28.39 & 30.63 & 31.31
        & 30.33 & 31.83 & 5.44 \\
        & Interval($\omega=7.5,\ t_h=0.7T$)
        & -- & 27.66 & 24.76 & 27.57 & 28.55
        & 27.13 & 31.41 & 5.21 \\
        & Ours($\omega=7.5,\ \log\beta^*=10.0$)
        & \textbf{24.88} & \textbf{31.12}
        & \textbf{28.57} & \textbf{30.98}
        & \textbf{31.51} & \textbf{30.55}
        & \textbf{31.89} & \textbf{5.45} \\
        \midrule
        \multirow{2}{*}{FLUX.1 (256, 50 steps)}
        & CFG($\omega=1.5, B=64$)
        & 30.93 & 29.77 & 27.18 & 29.21 & 30.80
        & 29.24 & \textbf{31.10} & 5.46 \\
        & Ours($\omega=1.5,\ \log\beta^*=8.0$)
        & \textbf{30.36} & \textbf{29.87}
        & \textbf{27.38} & \textbf{29.26}
        & \textbf{30.94} & \textbf{29.36}
        & 30.93 & \textbf{5.48} \\
        \midrule
        \multirow{3}{*}{FLUX.1 (1024, 28 steps)}
        & CFG($\omega=7.5$)
        & -- & 20.41 & 18.80 & 20.71 & 20.60
        & 20.13 & -- & -- \\
        & Ours($\omega=7.5,\ \log\beta^*=8.0,\ B=16$)
        & -- & \textbf{30.84} & \textbf{29.49}
        & \textbf{30.55} & \textbf{32.05}
        & \textbf{30.73} & -- & -- \\
        & Ours($\omega=7.5,\ \log\beta^*=8.0,\ B=1$)
        & -- & 30.36 & 29.08 & 30.21 & 31.50
        & 30.29 & -- & -- \\
        \bottomrule
    \end{tabular}%
    }
    \label{tab:different_guidance_compare}
\end{table*}

%% file: sec/conclusion.tex
\section{Conclusion and Future Work}\label{conclusion}
In this work, we study Classifier-Free Guidance (CFG) from Information Bottleneck perspective. We theoretically established that conditional information is not uniformly effective throughout the generation process but becomes critical only after a specific stage characterized by a Fisher-information balance. Based on this insight, we proposed a training-free, dynamic CFG strategy that adaptively activates guidance by monitoring score statistics. Extensive experiments demonstrate that our method consistently improves sample quality and robustness across diverse architectures.

There remain several promising directions for future research. \textbf{Theoretically}, as mentioned in Section \ref{theory}, while our Information Bottleneck analysis extends to stochastic flow models, extending the framework to fully characterize deterministic couplings would be an important next step toward a more complete understanding. 
\textbf{Practically}, while our method supports
single-sample inference, batch averaging
can improve the stability of the activation
statistic. Developing more robust
sample-specific criteria and extending
the framework to heterogeneous
conditioning scenarios remain promising
directions for future research.

\noindent{\textbf{Acknowledgment.}} 
This work is supported by National Nature Science Foundation of China under grant 62671375, U24A20220, and by the Shanghai Municipal Commission of Economy and Informatization, under Grant No. 2024-GZL-RGZN-01008.

%% file: sec/appendix.tex

\section{Extended Related Work}\label{Related Work}

Conditional control in diffusion models is commonly achieved through classifier guidance (CG, \cite{Dhariwal2021DiffusionMB}) and classifier-free guidance (CFG, \cite{Ho2022ClassifierFreeDG}). CG introduces an external classifier to steer generation toward the target condition:
\begin{align}\label{CG}
    \hat{\epsilon}(x_t, t, y) = \epsilon_\theta(x_t, t, \varnothing) + \omega \, \Sigma_\theta(x_t, t) \nabla \log p_t(y \mid x_t),
\end{align}
while CFG replaces the classifier with the difference between conditional and unconditional score estimates:
\begin{align}
    \hat{\epsilon}(x_t, t, y) = \omega \left[ \epsilon_\theta(x_t, t, y) - \epsilon_\theta(x_t, t, \varnothing) \right] + \epsilon_\theta(x_t, t, \varnothing).
\end{align}
In both cases, the guidance scale $\omega$ controls the trade-off between fidelity and diversity. Owing to its simplicity and strong empirical performance, CFG has become the dominant guidance strategy in modern diffusion models [4]. However, most existing CFG-based methods apply a fixed guidance rule uniformly across all reverse diffusion steps, and recent studies have shown that this design is often suboptimal.

A broad series of work therefore investigates dynamic or adaptive guidance \emph{strength}. For example, \citep{malarz2025classifier} proposes a $\beta$-adaptive strategy that enforces weak guidance in the early and late stages while strengthening it in the middle phase of denoising. \citep{Sadat2025GuidanceIT} introduces Frequency-Decoupled Guidance (FDG), which applies different guidance strengths to low- and high-frequency components. \citep{lin2024common} rescales CFG to alleviate over-exposure effects, while \citep{poleski2025geoguide} proposes a geometry-aware normalization strategy to maintain stable guidance magnitude during late denoising. These methods suggest that guidance should vary with time or signal structure, and mainly focus on designing better weighting functions $w(t)$ or modified guidance parameterizations.

To mitigate the computational cost of training distinct conditional and unconditional models, several approaches explore the idea of \textit{degrading} the conditional network to mimic unconditionality.
\cite{Karras2024autoguidance} achieves this by utilizing an under-trained state of the network, arguing that early-stopped weights contain less semantic bias and thus behave more unconditionally. 
In a complementary vein, \cite{rajabi2025token} introduces stochasticity by adding noise to the conditioning tokens directly. 
By feeding corrupted inputs into the conditional network, the output becomes a degraded approximation of the true unconditional distribution, effectively replacing the need for a separate network.

As for when to activate CFG, a representative example is Interval Guidance \citep{kynkaanniemi2024applying}, which shows that applying CFG over all denoising steps is suboptimal: guidance tends to harm diversity in the high-noise regime, has limited effect in the very low-noise regime, and is most beneficial only over an intermediate interval. Their method therefore restricts guidance to a fixed range of timesteps, improving both sample quality and distributional fidelity. While this line of work highlights the importance of \emph{when} guidance is applied, existing methods still largely rely on hand-designed schedules or fixed activation intervals.

Beyond schedule-based reweighting, another series of work revisits the guidance mechanism itself. CFG-Ctrl \cite{wang2026cfg} formulates the denoising process as a dynamical system and interprets most existing CFG-based methods from a control-theoretic perspective, arguing that purely linear guidance may become unstable at large guidance scales and proposing a sliding-mode-control-based variant to improve robustness. In a different direction, Momentum Guidance \cite{liao2026momentum} modifies the sampling dynamics by incorporating momentum into the CFG-adjusted flow update, providing a training-free and plug-and-play way to improve sampling robustness without retraining the model. Self-Guidance \cite{li2025self} instead derives an additional training-free guidance signal from temporal prediction discrepancies within the model itself, comparing predictions at the current and shifted timesteps and combining the resulting correction with CFG.

On a more theoretical front, several recent works have begun to analyze the role of conditioning in diffusion models through the lens of information theory.  \cite{11195367} proposes an information-theoretic approach that maximizes the conditional mutual information to ensure the generation process retains maximum information about the measurements. Concurrently, \cite{yuan2024dynamic} proposes the Dynamic Graph Information Bottleneck (DGIB) to learn robust and discriminative representations, defending against attacks on link prediction tasks. These studies align closely with our goal of adaptive activation, as both seek to identify the most informative moments for conditioning.

Recent studies have explored information-theoretic
principles to better understand and improve
diffusion-based generative modeling.
\cite{shen2025information,DBLP:conf/aaai/ZhengGJYWZCWL26}
generalize the connection between KL divergence
,Fisher divergence and Wasserstein distance to non-Gaussian isotropic
noise perturbations, introducing a warm-started,
variance-aware framework for tighter likelihood
estimation and more stable diffusion training.
More recently, Shen et al.~\cite{shen2026rnyi}
propose Rényi Diffusion Models, which formulate
diffusion training through Rényi divergence
and derive a generalized score-matching
objective to explicitly control the trade-off
between sample quality and distribution coverage.
While these studies primarily investigate
information-theoretic objectives for learning
generative distributions, our work focuses
on the role of conditional information during
inference. Specifically, we formulate the
diffusion trajectory as a time-dependent
information bottleneck and derive a
Fisher-information balance condition
to characterize an effective region for
CFG activation. This leads to a training-free
guidance strategy that dynamically determines
when conditional guidance should be applied.

Overall, these studies demonstrate that diffusion guidance can be improved by adjusting its strength, modifying its direction, or altering how guided updates are integrated over time. In contrast, our work focuses on \emph{adaptive guidance activation}: rather than prescribing a fixed schedule or interval in advance, we study when conditional guidance should be activated based on the generation dynamics themselves. Notably, our formulation also provides a unified perspective on interval-based guidance, which can be interpreted as a special case under a limiting setting of our adaptive activation rule.


\section{Proof of Theorems}\label{proof}
\subsection{Proof of Lemmas}\label{proof0}
In this section, we present several auxiliary results that serve as
preparatory steps for the proof of Theorem~\ref{thm:cfg_ib} and \ref{thm:flow_ib}.
First, we establish a general entropy evolution formula for diffusion processes.
Second, we specialize this result to mutual information under conditioning. 
\begin{lemma}\label{entropy eq}
    Let $X_t\in \mathbb R^n$ be the solution of the SDE 
    \begin{align}\label{SDE_IT}
        \rd X_t = f(X_t,t)\rd t+g(t)\rd w_t.
    \end{align}
    
    Then
    \begin{align}
        \frac{\rd}{\rd t}H(X_t) &= \frac{1}{2}g^2(t) J(X_t)+\mathbb E_{p_t}[\nabla \cdot f_t],
    \end{align}
    where $p_t$ is the marginal distribution of $X_t$. In particular, for the Ornstein--Uhlenbeck (OU) process
$$\rd X_t=f(t) X_t \rd t+g(t) \rd w_t,$$ we have the form:
    \begin{align}
        \frac{\rd}{\rd t}H(X_t) &= \frac{1}{2}g^2(t)J(X_t)+\mathbb E[\nabla\cdot (f(t)X_t)]\\
        &= \frac{1}{2}g^2(t)J(X_t)+ nf(t)
    \end{align}
\end{lemma}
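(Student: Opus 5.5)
The plan is to differentiate the differential entropy $H(X_t)=-\int p_t\log p_t\,\rd x$ in time and substitute the Fokker--Planck equation for the marginal density. Since $X_t$ solves \eqref{SDE_IT} with state-independent diffusion coefficient $g(t)$, the density $p_t$ satisfies
\begin{align}
\partial_t p_t = -\nabla\cdot(f_t\,p_t) + \tfrac12 g^2(t)\,\Delta p_t .
\end{align}
Differentiating under the integral sign and using $\int \partial_t p_t\,\rd x = 0$ (mass conservation) gives
\begin{align}
\frac{\rd}{\rd t}H(X_t) = -\int \partial_t p_t\,\log p_t\,\rd x .
\end{align}
I will assume enough smoothness and decay of $p_t$, $f_t$ and $\log p_t$ at infinity to justify these operations. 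These are the standard regularity assumptions under which the theorems are stated.

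The two terms are then handled separately by integration by parts. For the diffusion term,
\begin{align}
-\tfrac12 g^2\int \Delta p_t \log p_t\,\rd x = \tfrac12 g^2\int \nabla p_t\cdot\nabla\log p_t\,\rd x = \tfrac12 g^2 \int p_t\|\nabla\log p_t\|^2\rd x = \tfrac12 g^2(t) J(X_t).
\end{align}
This is de Bruijn's identity. For the drift term,
\begin{align}
\int \nabla\cdot(f_t p_t)\log p_t\,\rd x = -\int p_t f_t\cdot\nabla\log p_t\,\rd x = -\int f_t\cdot\nabla p_t\,\rd x = \int p_t\,\nabla\cdot f_t\,\rd x .
\end{align}
Each equality is one integration by parts, and the last quantity is $\mathbb E_{p_t}[\nabla\cdot f_t]$. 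Summing the two contributions yields the general formula.

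For the OU case, $f(x,t)=f(t)x$, so $\nabla\cdot(f(t)x)=n f(t)$ is deterministic and the expectation is simply $nf(t)$. This gives the second display.

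The main obstacle is rigor rather than computation. Three things need checking:
\begin{itemize}
\item the boundary terms in each integration by parts vanish;
\item differentiation under the integral is legitimate;
\item $J(X_t)$ is finite.
\end{itemize}
For the OU case these follow because $p_t$ is the convolution of the law of $e^{\int_0^t f}X_0$ with a nondegenerate Gaussian $\mathcal N(0,m(t)I)$ whenever $m(t)>0$. That convolution makes $p_t$ smooth with Gaussian-type tails and controlled $\nabla\log p_t$, so the needed bounds hold for $t>0$ provided $X_0$ has a finite second moment. For a general drift I would simply state growth and regularity conditions on $f$ as hypotheses.
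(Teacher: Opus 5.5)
Your proposal is correct and follows the paper's proof: substitute the Fokker--Planck equation, drop $\int\partial_t p_t\,\rd x$ by mass conservation, and integrate the drift and diffusion terms by parts separately. The one small difference is in the Laplacian term, where the paper moves $\Delta$ onto $\log p_t$ and uses $\Delta\log p = \Delta p/p - \|\nabla\log p\|^2$, whereas you integrate by parts once to reach $\int\nabla p_t\cdot\nabla\log p_t\,\rd x$ directly; your regularity remarks go beyond the paper, which carries out these manipulations formally.
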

\begin{proof}
    FPE of \eqref{SDE_IT} is 
    \begin{align*}
        \partial_tp_t = -\nabla\cdot(f_tp_t)+\frac{1}{2}g^2(t)\Delta p_t,
    \end{align*}
    as for differential entropy $H(X_t)=-\int p_t\log p_t\rd x$, we obtain
    \begin{align*}
        \frac{\rd}{\rd t}H(X_t) &= -\int \partial_t p_t \log p_t\rd x - \int \partial_t p_t \rd x\\
        &= -\int \partial_t p_t \log p_t\rd x\\
        &= \int \nabla\cdot(f_tp_t) \log p_t\rd x - \int \frac{1}{2}g^2(t)\Delta p_t \log p_t\rd x,
    \end{align*}

    then we calculate the terms in the above equation,
    \begin{align*}
        \int \nabla\cdot(f_tp_t) \log p_t\rd x &=
        (-1)\int\innerprod{f_tp_t}{\frac{\nabla p_t}{p_t}}\\
        &=-\int \innerprod{f_t}{\nabla p_t}\\
        &= \mathbb E_{p_t}[\nabla \cdot f_t],
    \end{align*}
    using $\Delta \log p = \Delta p/p-(\nabla \log p)^2$,
    \begin{align*}
        \int \frac{1}{2}g^2(t)\Delta p_t \log p_t\rd x &= \frac{1}{2}g^2(t)
        \int p_t\Delta\log p_t\\
        &=\frac{1}{2}g^2(t)
        \int p_t(\Delta p_t/p_t-(\nabla \log p_t)^2)\\
        &= -\frac{1}{2}g^2(t) J(X_t),
    \end{align*}
    so we obtain 
    \begin{align*}
        \frac{\rd}{\rd t}H(X_t) &= \frac{1}{2}g^2(t) J(X_t)+\mathbb E_{p_t}[\nabla \cdot f_t].
    \end{align*}
\end{proof}
Lemma~\ref{entropy eq} characterizes how differential entropy evolves along a diffusion trajectory. 
We next leverage this result to study the time evolution of mutual information under conditioning.

\begin{lemma}[Information evolution for diffusion models]\label{thm Fisher}
    We treat the conditioning variable $Y$ as a random variable, which may correspond to $C$ or $X_0$ in our setting. The output $X_t$ of the SDE \eqref{SDE_IT} with condition $Y$ satisfies 
    \begin{align}
        \frac{\rd}{\rd t}I(X_t;Y)&=-\frac{1}{2}g^2(t)J(X_t; Y).
    \end{align}
\end{lemma}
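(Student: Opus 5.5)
We write $I(X_t;Y)=H(X_t)-H(X_t\mid Y)$ and differentiate each entropy term with Lemma~\ref{entropy eq}. The marginal term is already covered: applying Lemma~\ref{entropy eq} to $p_t$ gives
\begin{align*}
\frac{\rd}{\rd t}H(X_t)=\frac12 g^2(t)J(X_t)+\mathbb E_{p_t}[\nabla\cdot f_t].
\end{align*}
For the conditional term we write $H(X_t\mid Y)=\mathbb E_{Y}\bigl[H(X_t\mid Y=y)\bigr]$. Here the Markov structure $Y\to X_0\to X_t$ (with $Y=C$ or $Y=X_0$) is essential. Given $Y=y$, the process $X_t$ is driven by the same SDE \eqref{SDE_IT}, with the same drift $f$, diffusion $g$ and independent Wiener noise. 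Only its initial law changes, to $p(x_0\mid y)$. Hence the conditional density $p_t(\cdot\mid y)$ solves the same Fokker--Planck equation, and Lemma~\ref{entropy eq} applies for each fixed $y$:
\begin{align*}
\frac{\rd}{\rd t}H(X_t\mid Y=y)=\frac12 g^2(t)\,\mathbb E_{p_t(x\mid y)}\bigl[\|\nabla_x\log p_t(x\mid y)\|^2\bigr]+\mathbb E_{p_t(x\mid y)}[\nabla\cdot f_t].
\end{align*}

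Next we average over $y$ and interchange $\frac{\rd}{\rd t}$ with $\mathbb E_Y$. The Fisher term then becomes exactly $J(X_t\mid Y)$ by the paper's definition. The drift term becomes $\mathbb E_{p(x,y)}[\nabla\cdot f_t]=\mathbb E_{p_t}[\nabla\cdot f_t]$, because $f$ does not depend on $y$ and the $x$-marginal of $p_t(x,y)$ is $p_t$. Subtracting the two entropy derivatives cancels the drift contributions, leaving
\begin{align*}
\frac{\rd}{\rd t}I(X_t;Y)=\frac12 g^2(t)\bigl(J(X_t)-J(X_t\mid Y)\bigr)=-\frac12 g^2(t)J(X_t;Y),
\end{align*}
which is the claim, using $J(X_t;Y)=J(X_t\mid Y)-J(X_t)$. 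For $Y=X_0$ the conditional law is Gaussian in the OU case, so the same argument applies with an explicit density.

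The main obstacle is justifying that Lemma~\ref{entropy eq} holds conditionally and that differentiation commutes with $\mathbb E_Y$. This requires regularity assumptions: the conditional densities $p_t(\cdot\mid y)$ are smooth and positive for $t>0$, the boundary terms in the integrations by parts vanish (enough decay of $f_tp_t\log p_t$ and $\nabla p_t\log p_t$), and the entropies and Fisher informations are finite and locally uniformly integrable in $y$ so that dominated convergence applies. For $t>0$ the Gaussian smoothing from $g$ provides this under mild moment assumptions on $X_0$. We would state these as standing assumptions, in the same spirit as the differentiability hypothesis already used in Theorem~\ref{thm:cfg_ib}.
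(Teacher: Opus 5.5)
Your proposal is correct and follows the same route as the paper. You apply Lemma~\ref{entropy eq} for each fixed $y$, then average over $Y$ so that the drift terms become $\mathbb E_{p_t}[\nabla\cdot f_t]$ and cancel against the marginal term, leaving $\frac{1}{2}g^2(t)\bigl(J(X_t)-J(X_t\mid Y)\bigr)=-\frac{1}{2}g^2(t)J(X_t;Y)$. Your extra points make explicit what the paper takes for granted: that the conditional process obeys the same SDE because of the Markov structure $Y\to X_0\to X_t$, and the regularity needed to interchange $\frac{d}{dt}$ with $\mathbb E_Y$.
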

\begin{proof}
    Lemma \ref{entropy eq} holds for any initial distribution. In particular, if $y$ is the given condition of conditional diffusion models, we have
    \begin{align}
        \frac{\rd}{\rd t}H(X_t|Y=y) &= \frac{1}{2}g^2(t)J(X_t|Y=y)+ \mathbb E_{p_t(\cdot|y)}[\nabla \cdot f_t],
    \end{align}
    where $p_t(\cdot|y)$ is the marginal distribution of $X_t$ given $Y=y$. For any condition $Y$, we obtain
    \begin{align}
        \frac{\rd}{\rd t}H(X_t|Y) =\frac{\rd}{\rd t}\mathbb E_{p(y)}H(X_t|Y=y)= \frac{1}{2}g^2(t)J(X_t|Y)+ \mathbb E_{p_t}[\nabla \cdot f_t],
    \end{align}
    where $p_t(\cdot)$ is the marginal distribution of $X_t$.
    Then we conclude that
    \begin{align}
        \frac{\rd}{\rd t}I(X_t;Y)&=\frac{\rd}{\rd t}(H(X_t)-H(X_t|Y))\\
        &= \frac{1}{2}g^2(t)(J(X_t) - J(X_t|Y))\\
        & = -\frac{1}{2}g^2(t)J(X_t; Y),
    \end{align}
    as desired.
\end{proof}
We are now ready to complete the proof of Theorem~\ref{thm:cfg_ib} and \ref{thm:flow_ib} by applying the above mutual information dynamics to the information bottleneck objective.

\subsection{Proof of Theorem~\ref{thm:cfg_ib}}\label{proof1}
In this section, we provide a complete proof of Theorem~\ref{thm:cfg_ib}. 
The proof proceeds in three steps. We apply the lemmas proved in Appendix \ref{proof0} to the information bottleneck objective and derive the optimality condition.

\begin{proof}[Proof of Theorem~\ref{thm:cfg_ib}]
Let $L(t)$ denote the information bottleneck objective defined in \eqref{eq:IBDiff}.
If $t^*$ is an interior minimizer of $L(t)$, it must satisfy the first-order optimality condition
\begin{align}
    \frac{\rd L(t^*)}{\rd t^*} = 0.
\end{align}
Applying Lemma~\ref{thm Fisher} to both mutual information terms appearing in $L(t)$, we obtain
\begin{align}
    \beta J(X_{t^*};C) = J(X_{t^*};X_0),
\end{align}
which completes the proof.
\end{proof}

\subsection{Proof of Theorem~\ref{thm:flow_ib}}\label{proof2}
In this section, we provide a complete proof of Theorem~\ref{thm:flow_ib}. 
\begin{lemma}[Information evolution for flow models]\label{thm:flow_fisher}
Let the flow model be defined by
\begin{align}
    X_t=\alpha_tX_0+\sigma_tX_1,
\end{align}
where $\alpha_t>0$ and $\sigma_t\ge 0$ are differentiable functions of $t$, and $X_1\sim\mathcal N(0,I)$ is independent of $(X_0,Y)$. 
Consequently, for any conditioning variable $Y$,
\begin{align}
    \frac{\rd}{\rd t}I(X_t;Y)
    =-\frac{1}{2}g^2(t)J(X_t;Y)
    =-\left(\sigma_t\dot\sigma_t-\frac{\dot\alpha_t}{\alpha_t}\sigma_t^2\right)J(X_t;Y).
\end{align}
\end{lemma}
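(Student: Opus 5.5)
The plan is to reduce the flow model to a linear (OU-type) SDE with the same one-dimensional marginals, conditionally on $(X_0,Y)$, and then apply Lemma~\ref{thm Fisher}. Since $X_t=\alpha_tX_0+\sigma_tX_1$ with $X_1\sim\mathcal N(0,I)$ independent of $(X_0,Y)$, for each fixed $t$ the conditional law of $X_t$ given $(X_0,Y)$ is $\mathcal N(\alpha_tX_0,\sigma_t^2I)$. Mutual information $I(X_t;Y)$ and the Fisher quantities $J(X_t)$, $J(X_t\mid Y)$ depend only on the joint law of $(X_t,Y)$ at each time $t$. It therefore suffices to exhibit a process $\tilde X_t$ solving $\rd \tilde X_t=f(t)\tilde X_t\,\rd t+g(t)\,\rd w_t$, with $\tilde X_0=X_0$ and $w$ independent of $(X_0,Y)$, such that $(\tilde X_t,Y)\overset{d}{=}(X_t,Y)$ for every $t$.

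To construct the matching, I would use the OU solution \eqref{eq:reparameterize}. There, conditionally on $X_0$, the law is $\tilde X_t\sim\mathcal N\bigl(e^{\int_0^tf}X_0,\,m(t)I\bigr)$. Matching the mean gives $f(t)=\dot\alpha_t/\alpha_t$, which is well defined because $\alpha_t>0$. Matching the variance requires $m(t)=\sigma_t^2$. The ODE $\dot m=2f m+g^2$ then yields
\begin{align*}
g^2(t)=2\sigma_t\dot\sigma_t-2\frac{\dot\alpha_t}{\alpha_t}\sigma_t^2,
\end{align*}
and $m(0)=\sigma_0^2=0$ is consistent with the initial condition. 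Hence $\tfrac12 g^2(t)=\sigma_t\dot\sigma_t-\tfrac{\dot\alpha_t}{\alpha_t}\sigma_t^2$, which is exactly the coefficient in the statement.

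Given this matching, I would apply Lemma~\ref{thm Fisher} to $\tilde X_t$ and transfer the result to $X_t$ via equality in law. This gives $\frac{\rd}{\rd t}I(X_t;Y)=-\tfrac12 g^2(t)J(X_t;Y)$, with $g^2$ as above. The argument works for any $Y$ that sits upstream of $X_0$ (for example $Y=C$ or $Y=X_0$), because the noise is independent of $(X_0,Y)$.

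The main obstacle is that $g^2(t)$ must be nonnegative for a genuine SDE to exist. This amounts to $\frac{\rd}{\rd t}\log(\sigma_t/\alpha_t)\ge0$, i.e.\ the signal-to-noise ratio must be monotone. That holds for standard schedules such as the linear path $\alpha_t=1-t$, $\sigma_t=t$. I would either assume it explicitly, or sidestep it by verifying the formula directly. For the direct route, I would differentiate $p_t(x\mid y)=\int \mathcal N(x;\alpha_tx_0,\sigma_t^2I)\,p(x_0\mid y)\,\rd x_0$ in $t$. The Gaussian kernel satisfies a Fokker--Planck equation with drift $(\dot\alpha_t/\alpha_t)x$ and diffusion coefficient $\tfrac12 g^2(t)$ (possibly negative, acting as a backward heat equation). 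The computation in Lemma~\ref{entropy eq} uses only this PDE plus integration by parts, not the sign of $g^2$, so it goes through unchanged. A secondary technical point is the endpoint behaviour near $t=0$, where $\sigma_t\to0$ and $J(X_t\mid X_0)=n/\sigma_t^2$ blows up. Since the claim is pointwise for $t\in(0,1)$, I would simply restrict to the open interval and assume enough regularity to justify the integration by parts.
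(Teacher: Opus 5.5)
Your proposal is correct and follows the paper's own proof: you match $X_t=\alpha_tX_0+\sigma_tX_1$ to an OU process with $f(t)=\dot\alpha_t/\alpha_t$ and $g^2(t)=2\sigma_t\dot\sigma_t-2\frac{\dot\alpha_t}{\alpha_t}\sigma_t^2$, then apply Lemma~\ref{thm Fisher}, and the identity carries over to $X_t$ because the joint law of $(X_t,Y)$ is the same. Your additional remarks are sound and make explicit what the paper's proof uses without stating: the matching tacitly needs $\sigma_t/\alpha_t$ nondecreasing so that $g^2\ge 0$, and checking the Fokker--Planck identity directly for the Gaussian kernel removes this sign condition.
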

\begin{proof}
    The reparameterized form of OU-SDE $\rd X_t=f(t) X_t \rd t+g(t) \rd w_t$
    is 
    \begin{align}
        X_t=e^{\int_0^tf(s)\rd s} X_0 + \sqrt{\int_0^t
\exp\left(2\int_s^t f(r)\mathrm dr\right)
g^2(s)\mathrm ds} Z, 
    \end{align}
    so for flow model
    \begin{align}
        X_t=\alpha_tX_0+\sigma_t X_1, 
    \end{align}
    we can find its FPE by turning it into OU-SDE by defining
    \begin{align}
        \alpha_t = e^{\int_0^tf(s)\rd s}, ~~~~\sigma^2_t=\int_0^t
\exp\left(2\int_s^t f(r)\mathrm dr\right)
g^2(s)\mathrm ds = \alpha_t^2\int_0^t\frac{g^2(s)}{\alpha_s^2}\rd s, 
    \end{align}
    the solution can be easily found:  
    \begin{align}
        f(t)=\frac{\dot{\alpha_t}}{\alpha_t}, ~~~g^2(t)=2\sigma_t\dot{\sigma_t}-\frac{2\dot{\alpha_t}\sigma_t^2}{\alpha_t}, 
    \end{align}
    so using Lemma \ref{thm Fisher}, 
    \begin{align}
        \frac{\rd}{\rd t}I(X_t;Y)&= -\frac{1}{2}\left(2\sigma_t\dot{\sigma_t}-\frac{2\dot{\alpha_t}\sigma_t^2}{\alpha_t}\right)J(X_t; Y)\\
        &=-\left(\sigma_t\dot{\sigma_t}-\frac{\dot{\alpha_t}\sigma_t^2}{\alpha_t}\right)J(X_t; Y),
    \end{align}
\end{proof}
\begin{proof}[Proof of Theorem \ref{thm:flow_ib}]
    Thus just like the proof of Theorem \ref{thm:cfg_ib}, let $L(t)$ denote the information bottleneck objective defined in \eqref{eq:IBDiff}.
If $t^*$ is an interior minimizer of $L(t)$, it must satisfy the first-order optimality condition
\begin{align}
    \frac{\rd L(t^*)}{\rd t^*} = 0.
\end{align}
Applying Lemma~\ref{thm:flow_fisher} to both mutual information terms appearing in $L(t)$, we obtain
\begin{align}
    \beta J(X_{t^*};C) = J(X_{t^*};X_0),
\end{align}
which completes the proof.
\end{proof}

\section{Estimation of Score on Rectified Flow} \label{appen:discuss}
\subsection{Asymptotic Consistency Near the Gaussian Endpoint}
The score--velocity conversion used in flow matching \cite{Lipman2022FlowMF} is exact for stochastic Gaussian interpolation paths with independent endpoints. In particular, for independent $(X_0, X_1)$, one has
\[
\nabla \log p_t(x)
=
\frac{\frac{\alpha_t}{\dot \alpha_t} v_t(x)-x}
{\sigma_t^2-\frac{\alpha_t}{\dot \alpha_t}\dot \sigma_t \sigma_t}.
\]
For the rectified flow parameterization \cite{Liu2022FlowSA}, this identity reduces to the surrogate score
\[
\tilde s_t(x)
=
\frac{(t-1)v_t(x)-x}{t}.
\]
For deterministic couplings such as $n$-rectified flow with $n\ge 2$, the above conversion is in general no longer exact, since the information-theoretic assumptions underlying Theorem~2 do not apply. Therefore, in this setting, $\tilde s_t$ should be understood as a practical approximation induced by the learned velocity field, rather than the true score $s_t=\nabla \log p_t$.

Nevertheless, this approximation becomes asymptotically accurate near the high-noise Gaussian endpoint. Indeed, as $t\to 1$,
\[
\tilde s_t(x) = \frac{(t-1)v_t(x)-x}{t} \longrightarrow -x,
\]
provided that $v_t(x)$ remains bounded near $t=1$. On the other hand, if the endpoint distribution satisfies $p_1=\mathcal N(0,I)$, then its true score is exactly
\[
s_1(x)=\nabla \log p_1(x)=-x.
\]
Hence,
\[
\tilde s_t(x)\to s_1(x)
\qquad \text{as } t\to 1.
\]
This implies that $\tilde s_t(x)$ is an asymptotically accurate proxy near the Gaussian endpoint.

\subsection{Consistency of Score Differences via Local PDE Dynamics}
We next compare the evolution of the two score functions. The surrogate score satisfies
\begin{align*}
\partial_t\tilde s_t
=
\frac{(1-t)v_t+x_t}{t^2}
+\frac{v_t+(t-1)\frac{\partial v_t}{\partial t}}{t}
=
\frac{v_t+x_t}{t^2}
+\frac{t-1}{t}\frac{\partial v_t}{\partial t},
\end{align*}
and when $t\to1$, it becomes
\begin{align}
\partial_t\tilde s_t \approx x_t+v_t.
\label{eq:approx}
\end{align}

Meanwhile, the true score satisfies
\begin{align*}
\partial_t s_t
=
-\nabla(v_t\cdot s_t)-\nabla(\nabla\cdot v_t)
=
-J(v_t)s_t-J(s_t)v_t-\nabla(\nabla\cdot v_t).
\end{align*}
Near the Gaussian endpoint, we use the local approximation \(s_t(x_t)\approx -x_t\), so that \(J(s_t)\approx -I\). If, in addition, \(v_t\) is locally affine in \(x\) near \(t=1\), then \(\nabla(\nabla\cdot v_t)\approx 0\), which yields
\begin{align*}
\partial_t s_t \approx J(v_t)x_t+v_t.
\end{align*}

Subtracting the surrogate dynamics~\eqref{eq:approx} from the true score evolution, the local discrepancy in the time derivative is given by
\begin{align}
    \partial_t(s_t-\tilde s_t)\approx (J(v_t)-I)x_t.
\end{align}
The equation above is fit for any conditional velocity. And as our target is to find out whether $$\|s_t(x_t|c)\|^2-\|s_t(x_t|\varnothing)\|^2\approx\|\tilde s_t(x_t|c)\|^2-\|\tilde s_t(x_t|\varnothing)\|^2.$$
To rigorously justify the approximation, we examine the evolution between the conditional and unconditional score differences.
Since both the true scores and their surrogates converge to the same value (specifically $-x_t$) as $t \to 1$, the critical factor determining the approximation quality is the evolution of their difference. We thus focus on the quantity:
\[
\Delta_t = (s_t(x_t|c) - s_t(x_t|\varnothing)) - (\tilde{s}_t(x_t|c) - \tilde{s}_t(x_t|\varnothing)).
\]

Leveraging the local dynamics derived earlier, the time derivative of this discrepancy is governed by the Jacobian term:
\begin{align*}
    &\partial_t\Delta_t=\partial_t \big( (s_t - s_{t,c}) - (\tilde{s}_t - \tilde{s}_{t,c}) \big)\approx \big(J(v_t) - J(v_{t,c})\big) x_t,\\
    &\Delta_1=(s_1(x_t|c) - s_1(x_t|\varnothing)) - (\tilde{s}_1(x_t|c) - \tilde{s}_1(x_t|\varnothing))=0.
\end{align*}

As $t\to1$, the input data is dominated by noise, effectively masking the conditional information and making velocity fields $v_t$ and $v_{t,c}$ become increasingly similar near the Gaussian endpoint \cite{Gao_2026_CVPR}. Thus we assert that:
\[
J(v_t) - J(v_{t,c}) \to 0 \quad \text{as} \quad t \to 1.
\]

Consequently, the discrepancy $\Delta(t)$ vanishes asymptotically. This implies that the rate of change of the score difference $\|s_{t,c}\|^2-\|s_t\|^2$ in \eqref{eq:Delta} is preserved under the surrogate approximation$\|\tilde s_{t,c}\|^2-\|\tilde s_t\|^2$. Therefore, the proposed surrogate model asymptotically captures the correct guidance signal, validating the intended approximation.

\section{Implementation Details and Code}
\label{app:implementation_details}

This appendix provides the detailed implementation blueprint of our proposed method. As our method is a \textit{training-free} and \textit{plug-and-play} algorithm, the core implementation consists of a lightweight wrapper function that modifies the sampling process. Below is the code for our key procedures for SDE samplers of flow matching method. ODE samplers and DDPM/DDIM method is similar to this.
\input{sec/list}

\section{Additional Analysis and Experiments} \label{appen:exp}
\subsection{Properties of Stopping Time }
We collected a large-scale dataset of stopping times $\{\tau_i\}_{i=1}^N$ for CFG activation on SiT-XL/2 ($\log\beta^*=5.9$). To characterize the stochastic nature of this stopping time, we fitted the empirical distribution of $\tau$ against several candidate probability distributions, including Gamma, Log-Normal, Exponential, and Weibull distributions.

Descriptive statistics of the collected dataset reveal a mean stopping time of $\bar{\tau} \approx 0.680$ with a standard deviation of $\sigma_{\tau} \approx 0.050$. This indicates that while the stopping time exhibits a clear central tendency, it retains a moderate degree of variability across different sampling trajectories.

The goodness of fit was evaluated using the log-likelihood metric. We find out that the Weibull distribution achieves the highest log-likelihood ($\mathcal{L} \approx 713.24$), significantly outperforming the Exponential ($\mathcal{L} \approx -268.89$) and Gamma ($\mathcal{L} \approx 675.64$) baselines. This suggests that the optimal stopping time $\tau$ in our diffusion sampling process follows a Weibull distribution with shape parameter $k \approx 16.78$ and scale parameter $\lambda \approx 0.70$.

\begin{figure*}[!hbt]
    \centering
    \includegraphics[width=0.8\linewidth]{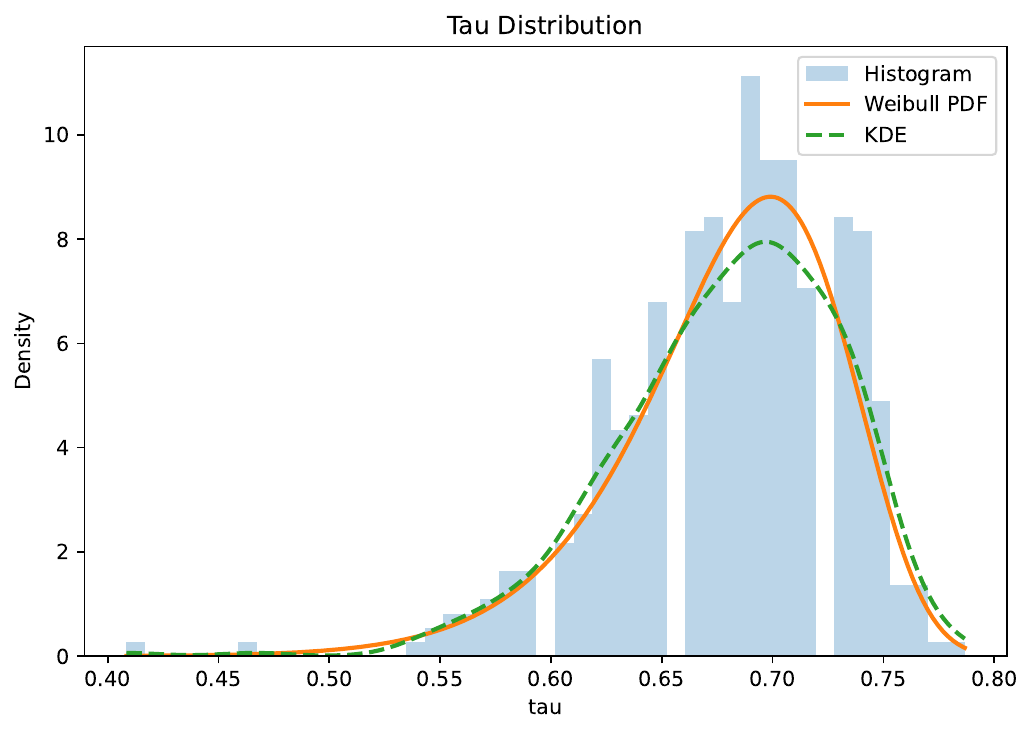}
    \caption{Probability density function (PDF) of the stopping time $\tau$. The histogram represents the empirical distribution of sampled $\tau$ values. The solid curve is the fitted Weibull PDF, and the dashed curve is the Kernel Density Estimation (KDE) curve.}
        \label{fig:tau_PDF}
\end{figure*}
The visual comparison presented in Figure~\ref{fig:tau_PDF} further corroborates the statistical findings. The histogram, which depicts the empirical frequency of $\tau$, aligns closely with both the Weibull PDF (solid line) and the KDE curve (dashed line). 

While the mean stopping time $\bar\tau$ closely approximates the heuristic interval boundary $t_h=0.7$ used in prior art (e.g., Interval Guidance), the distribution reveals a crucial nuance. The survival function of the Weibull fit suggests that although the peak intensity occurs around $0.7$, there exists a ``long tail" of activation events extending down to
$0.4$.

\input{tabel/appen_tab}

We additionally conduct an ablation by replacing the random stopping time with a fixed threshold set to the empirical mean $\bar{\tau} \approx 0.68$. The results are summarized in Table~\ref{tab:appen_tau_fid}. Compared to the baseline Interval Guidance ($t_h=0.7$), using the mean stopping time yields consistent improvements in IS, indicating that the statistical estimation of $\tau$ is indeed informative. However, this deterministic variant still underperforms our proposed stochastic stopping strategy, which achieves the best overall FID. 

This observation highlights that while the average stopping time captures the central tendency of activation, it fails to account for the intrinsic variability revealed by the Weibull distribution. In contrast, our method leverages this stochasticity to adaptively determine activation timing on a per-sample basis, thereby better aligning with the heterogeneous emergence of conditional structures during sampling. This result further supports the necessity and effectiveness of modeling $\tau$ as a random variable rather than a fixed constant.

\subsection{Ablation on Parameter \texorpdfstring{$\log \beta^*$}{log beta*}}
\input{tabel/ablation_beta}
We study the effect of the activation threshold $\log \beta^*$ in our dynamic CFG strategy on class-conditional ImageNet $256\times256$ generation. As shown in Table~\ref{tab:ablation4beta}, we vary $\log \beta^*$ while keeping other hyperparameters fixed.
We observe a clear non-monotonic trend. When $\log \beta^*$ is relatively small (e.g., 5.5–5.7), the performance is slightly worse than or comparable to the baseline, indicating that activating guidance too early introduces noisy or less informative conditional signals, which degrades sample quality. As $\log \beta^*$ increases (5.9–6.3), both FID and IS improve consistently, suggesting that delaying guidance allows the diffusion trajectory to first form a stable coarse structure before incorporating conditional information.
This trend is also visually summarized in Figure~\ref{fig:IS-FID-beta}, where the trajectory of points demonstrates the trade-off between quality and diversity, with the peak performance occurring around $\log\beta^* = 6.3$ before fidelity degrades at higher thresholds.

The best performance is achieved around $\log \beta^* \in [6.3, 6.7]$, where FID reaches 4.82 and IS improves significantly over the baseline. This aligns with our theoretical analysis that conditional information becomes more informative after a critical stage characterized by the Fisher-information balance.
However, when $\log \beta^*$ becomes too large (e.g., 6.9), we observe a slight degradation in FID despite continued improvement in IS. This suggests that overly delayed activation reduces the effective window during which guidance can refine samples, leading to suboptimal fidelity.

Overall, these results validate our hypothesis that guidance should be activated neither too early nor too late, and that an appropriate threshold plays a crucial role in achieving optimal performance.

\begin{figure*}[tbh]
    \centering
    \includegraphics[width=0.7\linewidth]{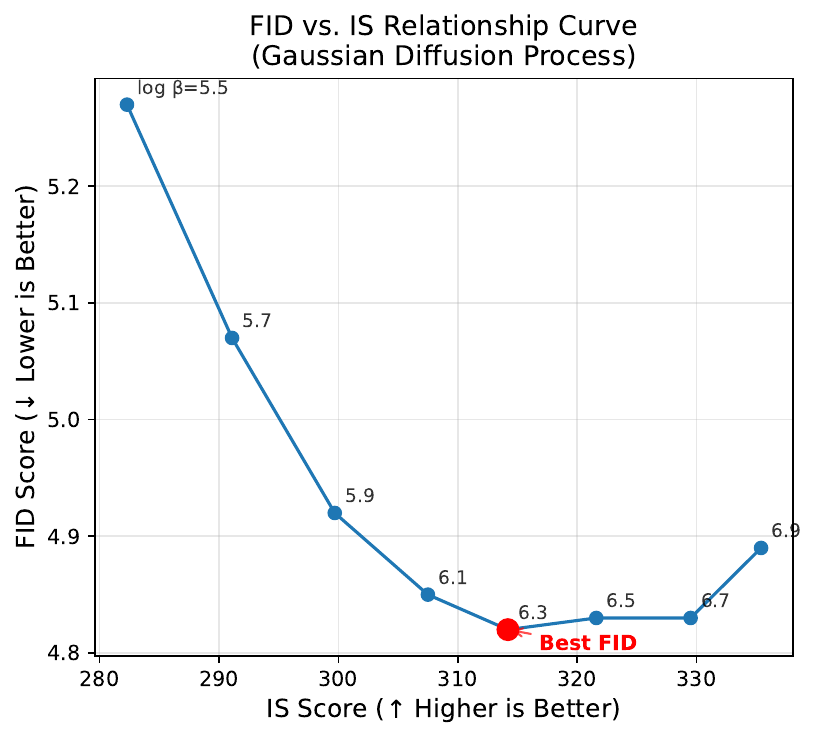}
    \caption{FID-IS trajectory under varying $\log\beta^*$ based on Table \ref{tab:ablation4beta}. Each point corresponds to a specific $\log\beta^*$ value along the trajectory. As $\log\beta^*$ increases, IS improves while FID exhibits a U-shaped trend, reflecting the quality-diversity trade-off in generation.}
        \label{fig:IS-FID-beta}
\end{figure*}
\subsection{Additional visual results}
Figure \ref{fig:vis} shows additional visual results using our method on Flux and SiT models.
\begin{figure}[!tbh]
    \centering
    \begin{subfigure}[H]{\textwidth}
        \centering
        \includegraphics[width=\linewidth]{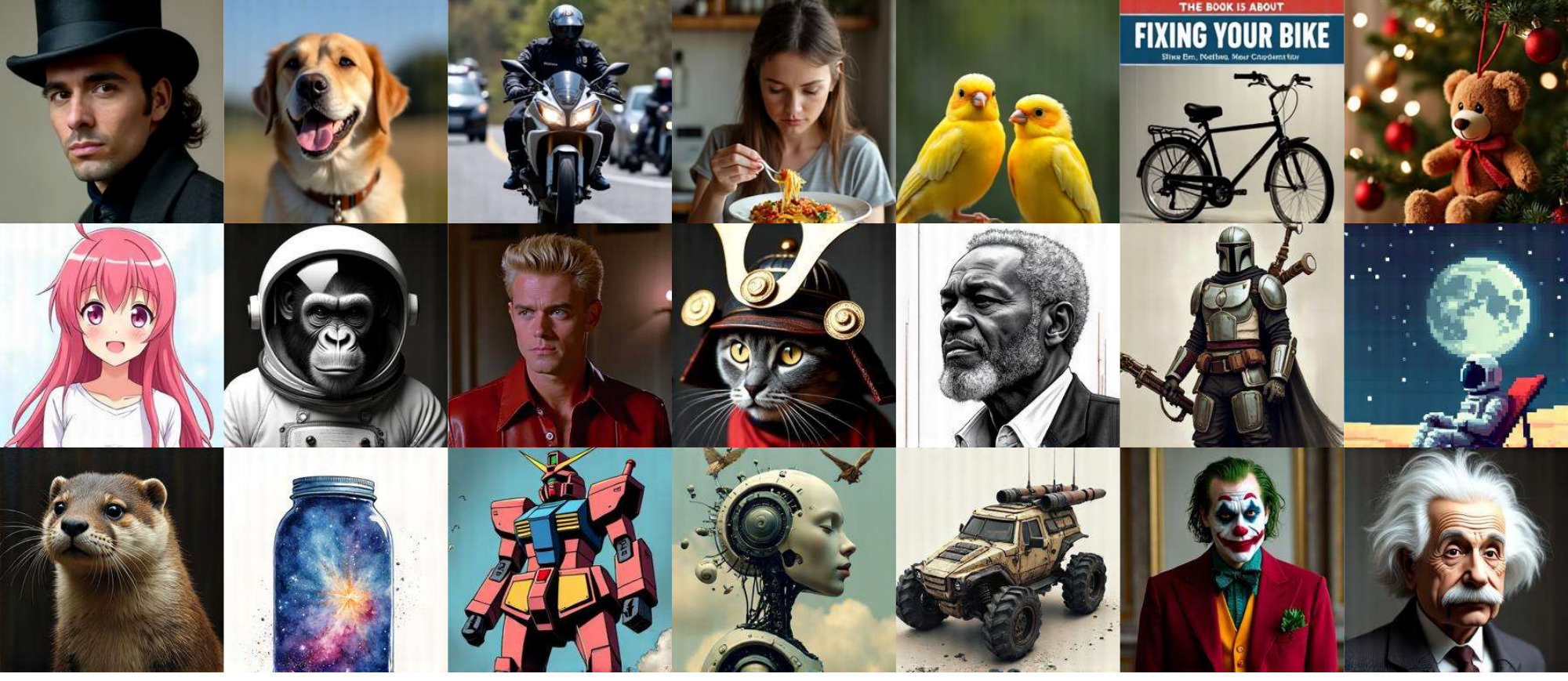}
        \caption{Images generated by the FLUX with our method.}
        \label{fig:vis_dit}
    \end{subfigure}
    
    
    \centering
    \begin{subfigure}[H]{\textwidth}
        \centering
        \includegraphics[width=\linewidth]{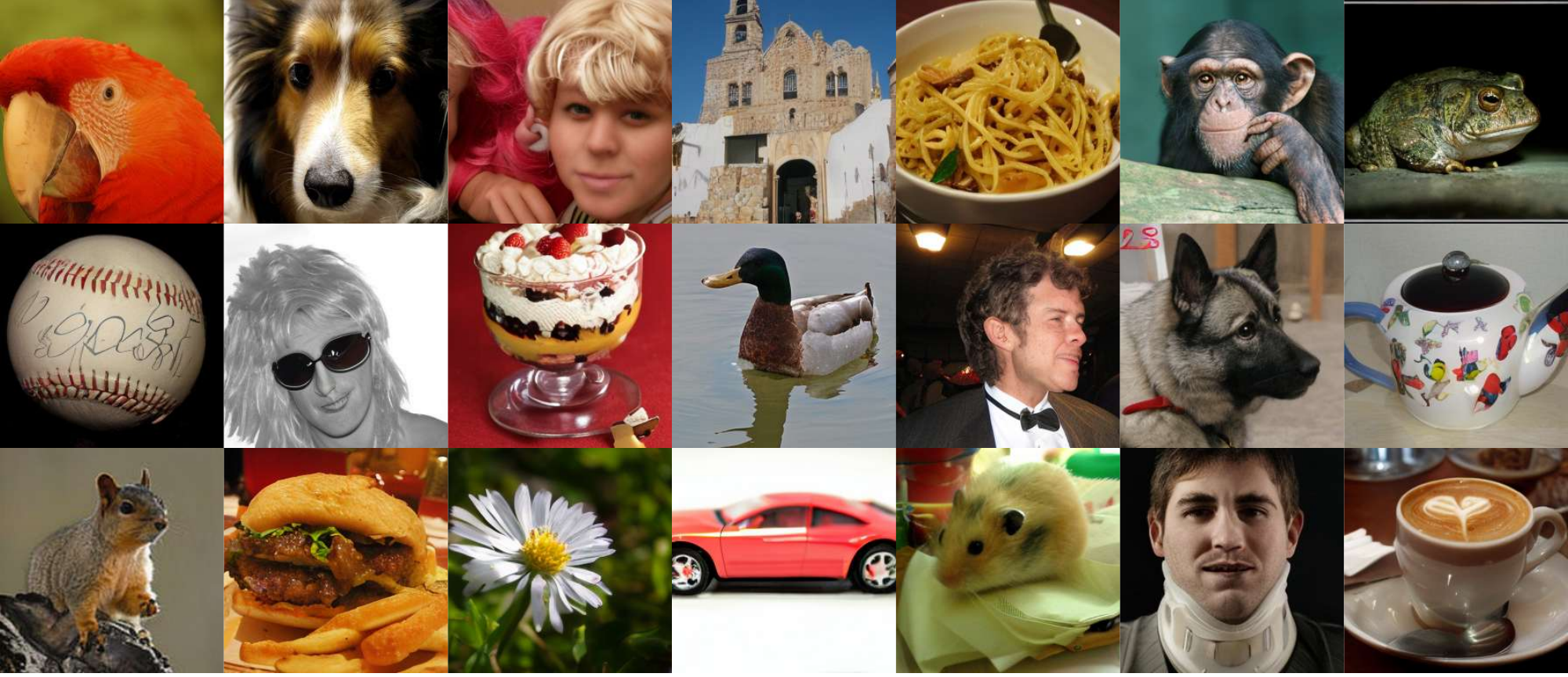}
        \caption{Images generated by the SiT-XL/2 (REPA) model with our method.}
        \label{fig:vis_repa}
    \end{subfigure}
    
    \caption{ 
         Additional visual  results on different models.
    }
    \label{fig:vis}
\end{figure}

%% file: sec/list.tex
\begin{center}
\begin{lstlisting}[language=Python, breaklines=true, basicstyle=\small\ttfamily]
def get_score(vt, xt, t, path_type="linear"):
    t = expand_t_like_x(t, xt)
    if path_type == "linear":
        alpha_t, d_alpha_t = 1 - t, torch.ones_like(xt, device=xt.device) * -1
        sigma_t, d_sigma_t = t, torch.ones_like(xt, device=xt.device)
    elif path_type == "cosine":
        alpha_t = torch.cos(t * np.pi / 2)
        sigma_t = torch.sin(t * np.pi / 2)
        d_alpha_t = -np.pi / 2 * torch.sin(t * np.pi / 2)
        d_sigma_t =  np.pi / 2 * torch.cos(t * np.pi / 2)
    else:
        raise NotImplementedError
    mean = xt
    reverse_alpha_ratio = alpha_t / d_alpha_t
    var = sigma_t - reverse_alpha_ratio * d_sigma_t
    score = (reverse_alpha_ratio * vt - mean)/ var
    return score
    
def euler_maruyama_sampler_plus(
        model,
        latents,
        y,
        num_steps=20,
        heun=False,  # not used, just for compatability
        cfg_scale=1.0,
        guidance_low=0.0,
        guidance_high=1.0,
        path_type="linear",
):
    thrd=-5.9
    if True:
        y_null = torch.tensor([1000] * y.size(0), device=y.device)
    _dtype = latents.dtype
    t_steps = torch.linspace(1., 0.04, num_steps, dtype=torch.float64)
    t_steps = torch.cat([t_steps, torch.tensor([0.], dtype=torch.float64)])
    x_next = latents.to(torch.float64)
    device = x_next.device
    flag = False
    with torch.no_grad():
        for i, (t_cur, t_next) in enumerate(zip(t_steps[:-2], t_steps[1:-1])):
            dt = t_next - t_cur
            x_cur = x_next
            model_input = torch.cat([x_cur] * 2, dim=0)
            y_cur = torch.cat([y, y_null], dim=0)
            kwargs = dict(y=y_cur)
            time_input = torch.ones(model_input.size(0)).to(device=device, dtype=torch.float64) * t_cur
            diffusion = compute_diffusion(t_cur)
            eps_i = torch.randn_like(x_cur).to(device)
            deps = eps_i * torch.sqrt(torch.abs(dt))
            v_cur = model(
                model_input.to(dtype=_dtype), time_input.to(dtype=_dtype), **kwargs
            )[0].to(torch.float64)
            s_cur = get_score_from_velocity(v_cur, model_input, time_input, path_type=path_type)
            d_cur = v_cur - 0.5 * diffusion * s_cur
            if t_cur <= guidance_high:
                flag=True
            if not flag:
                score = get_score(v_cur, model_input, time_input, path_type=path_type)
                s_cur_cond, s_cur_uncond = score.chunk(2)
                cond_sq = s_cur_cond.pow(2).mean(dim=[1, 2, 3])  # [B]
                uncond_sq = s_cur_uncond.pow(2).mean(dim=[1, 2, 3])
                E_cond = cond_sq.mean()
                E_uncond = uncond_sq.mean()
                beta = max(float(E_cond) - float(E_uncond), 0) + 1e-12
                log_beta = np.log(beta)
                if log_beta > thrd:
                    flag = True
            d_cur_cond, d_cur_uncond = d_cur.chunk(2)
            if flag:
                cfg_scale_cp = cfg_scale
            else:
                cfg_scale_cp = 1.0
            d_cur = d_cur_uncond + cfg_scale_cp * (d_cur_cond - d_cur_uncond)
            x_next = x_cur + d_cur * dt + torch.sqrt(diffusion) * deps
    t_cur, t_next = t_steps[-2], t_steps[-1]
    dt = t_next - t_cur
    x_cur = x_next
    if True:
        model_input = torch.cat([x_cur] * 2, dim=0)
        y_cur = torch.cat([y, y_null], dim=0)
    kwargs = dict(y=y_cur)
    time_input = torch.ones(model_input.size(0)).to(
        device=device, dtype=torch.float64
    ) * t_cur
    v_cur = model(
        model_input.to(dtype=_dtype), time_input.to(dtype=_dtype), **kwargs
    )[0].to(torch.float64)
    s_cur = get_score_from_velocity(v_cur, model_input, time_input, path_type=path_type)
    diffusion = compute_diffusion(t_cur)
    d_cur = v_cur - 0.5 * diffusion * s_cur
    if not flag:
        score = get_score(v_cur, model_input, time_input, path_type=path_type)
        s_cur_cond, s_cur_uncond = score.chunk(2)
        cond_sq = s_cur_cond.pow(2).mean(dim=[1, 2, 3])  # [B]
        uncond_sq = s_cur_uncond.pow(2).mean(dim=[1, 2, 3])
        E_cond = cond_sq.mean()
        E_uncond = uncond_sq.mean()
        beta = max(float(E_cond) - float(E_uncond), 0) + 1e-12
        log_beta = np.log(beta)
        if log_beta > thrd:
            flag = True
    d_cur_cond, d_cur_uncond = d_cur.chunk(2)
    if flag:
        cfg_scale_cp = cfg_scale
    else:
        cfg_scale_cp = 1.0
    d_cur = d_cur_uncond + cfg_scale_cp * (d_cur_cond - d_cur_uncond)
    mean_x = x_cur + dt * d_cur
    return mean_x
\end{lstlisting}
\end{center}

%% file: tabel/appen_tab.tex
\begin{table*}[!ht]
    \centering
    \footnotesize
    \setlength{\tabcolsep}{1pt}
    \caption{Comparison of fixed and stochastic stopping strategies on ImageNet 256. We evaluate a deterministic variant that uses the empirical mean stopping time ($t_h = \bar{\tau} \approx 0.68$) against Interval Guidance and our proposed stochastic stopping method. While the mean-based threshold improves over the baseline, it remains inferior to the stochastic strategy, highlighting the importance of modeling the variability of stopping time.}
    \resizebox{0.9\textwidth}{!}{%
    \begin{tabular}{lcccccc}
        \toprule
        \multicolumn{7}{l}{\bf\normalsize ImageNet  $B=48$ } \\
        \toprule
        \multirow{1}{*}{\bf{Model-256, 50k samples, 250 inference timesteps}} 
        & FID$\downarrow$ & IS$\uparrow$ & sFID$\downarrow$  \\
        \arrayrulecolor{black}\midrule
        SiT-XL/2 (REPA)($\omega=1.35$, SDE)     & 1.80 & 284.0 & \textbf{4.5} \\
        SiT-XL/2 (REPA, Interval) ($\omega=1.8, t_l=0,t_h=0.7$)     & \underline{1.42} & 305.7 &\underline{4.7} \\
        SiT-XL/2 (REPA) ($\omega=1.8, t_l=0,t_h=0.68$)     & {1.43} & {299.3} & \underline{4.7}   \\
        SiT-XL/2 (REPA) ($\omega=1.9, t_l=0,t_h=0.68$)     & \underline{1.42} & \textbf{310.4} & \underline{4.7}   \\
        \textbf{SiT-XL/2 (REPA) + Ours} ($\omega=1.9, \log \beta^*=5.9$)     & \textbf{1.41} & \underline{306.6} & \underline{4.7}   \\
        \bottomrule
    \end{tabular}
    }
    \label{tab:appen_tau_fid}
\end{table*}

%% file: tabel/ablation_beta.tex
\begin{table*}[!hbt]
\caption{\textbf{Ablation comparison on $\log \beta^*$.} Comparison of different evaluation metrics on Class-Conditional ImageNet datasets with different $\log \beta^*$.} 
    \begin{center}
    \scalebox{1.1}
    { 
    \begin{tabular}{lcccccc}
    \toprule
    \multicolumn{7}{l}{{\bf\normalsize ImageNet( 256$\times$256 ), 10k samples, 50 inference timesteps} } \\
    \toprule
    \multirow{1}{*}{\bf{SiT-XL/2 (REPA), SDE}} 
    & FID$\downarrow$ & IS $\uparrow$ \\
    \arrayrulecolor{black}\midrule 
    Baseline($\omega=1.8, t_h=0.7, t_l=0, B=32$)     & 5.09 & 287.
    4\\
    \arrayrulecolor{gray}\cmidrule(lr){1-7}
    Ours($\omega=1.9, \log\beta^*=5.5$)     & 5.27 & 282.3   \\
    Ours($\omega=1.9, \log\beta^*=5.7$)     & 5.07 & 291.1   \\
    Ours($\omega=1.9, \log\beta^*=5.9$)     & 4.92 & 299.7   \\
    Ours($\omega=1.9, \log\beta^*=6.1$)     & 4.85 & 307.5   \\
    Ours($\omega=1.9, \log\beta^*=6.3$)     & 4.82 & 314.2   \\
    Ours($\omega=1.9, \log\beta^*=6.5$)     & 4.83 & 321.6   \\
    Ours($\omega=1.9, \log\beta^*=6.7$)     & 4.83 & 329.5   \\
    Ours($\omega=1.9, \log\beta^*=6.9$)     & 4.89 & 335.4   \\
    \arrayrulecolor{black}\bottomrule
    \end{tabular}
    }
    
    \end{center}
\label{tab:ablation4beta}
\end{table*}

%% file: checklist.tex
\section*{NeurIPS Paper Checklist}

\begin{enumerate}

\item {\bf Claims}
    \item[] Question: Do the main claims made in the abstract and introduction accurately reflect the paper's contributions and scope?
    \item[] Answer: \answerYes{} 
    \item[] Justification: The main claims made in the abstract and introduction accurately reflect the paper's contributions and scope.
    \item[] Guidelines:
    \begin{itemize}
        \item The answer \answerNA{} means that the abstract and introduction do not include the claims made in the paper.
        \item The abstract and/or introduction should clearly state the claims made, including the contributions made in the paper and important assumptions and limitations. A \answerNo{} or \answerNA{} answer to this question will not be perceived well by the reviewers. 
        \item The claims made should match theoretical and experimental results, and reflect how much the results can be expected to generalize to other settings. 
        \item It is fine to include aspirational goals as motivation as long as it is clear that these goals are not attained by the paper. 
    \end{itemize}

\item {\bf Limitations}
    \item[] Question: Does the paper discuss the limitations of the work performed by the authors?
    \item[] Answer: \answerYes{} 
    \item[] Justification: See limitations in Section \ref{theory} and future work in Section \ref{conclusion}.
    \item[] Guidelines:
    \begin{itemize}
        \item The answer \answerNA{} means that the paper has no limitation while the answer \answerNo{} means that the paper has limitations, but those are not discussed in the paper. 
        \item The authors are encouraged to create a separate ``Limitations'' section in their paper.
        \item The paper should point out any strong assumptions and how robust the results are to violations of these assumptions (e.g., independence assumptions, noiseless settings, model well-specification, asymptotic approximations only holding locally). The authors should reflect on how these assumptions might be violated in practice and what the implications would be.
        \item The authors should reflect on the scope of the claims made, e.g., if the approach was only tested on a few datasets or with a few runs. In general, empirical results often depend on implicit assumptions, which should be articulated.
        \item The authors should reflect on the factors that influence the performance of the approach. For example, a facial recognition algorithm may perform poorly when image resolution is low or images are taken in low lighting. Or a speech-to-text system might not be used reliably to provide closed captions for online lectures because it fails to handle technical jargon.
        \item The authors should discuss the computational efficiency of the proposed algorithms and how they scale with dataset size.
        \item If applicable, the authors should discuss possible limitations of their approach to address problems of privacy and fairness.
        \item While the authors might fear that complete honesty about limitations might be used by reviewers as grounds for rejection, a worse outcome might be that reviewers discover limitations that aren't acknowledged in the paper. The authors should use their best judgment and recognize that individual actions in favor of transparency play an important role in developing norms that preserve the integrity of the community. Reviewers will be specifically instructed to not penalize honesty concerning limitations.
    \end{itemize}

\item {\bf Theory assumptions and proofs}
    \item[] Question: For each theoretical result, does the paper provide the full set of assumptions and a complete (and correct) proof?
    \item[] Answer: \answerYes{} 
    \item[] Justification: See Appendix \ref{proof}.
    \item[] Guidelines:
    \begin{itemize}
        \item The answer \answerNA{} means that the paper does not include theoretical results. 
        \item All the theorems, formulas, and proofs in the paper should be numbered and cross-referenced.
        \item All assumptions should be clearly stated or referenced in the statement of any theorems.
        \item The proofs can either appear in the main paper or the supplemental material, but if they appear in the supplemental material, the authors are encouraged to provide a short proof sketch to provide intuition. 
        \item Inversely, any informal proof provided in the core of the paper should be complemented by formal proofs provided in appendix or supplemental material.
        \item Theorems and Lemmas that the proof relies upon should be properly referenced. 
    \end{itemize}

    \item {\bf Experimental result reproducibility}
    \item[] Question: Does the paper fully disclose all the information needed to reproduce the main experimental results of the paper to the extent that it affects the main claims and/or conclusions of the paper (regardless of whether the code and data are provided or not)?
    \item[] Answer: \answerYes{} 
    \item[] Justification: See Section \ref{experiments results} and other settings are just same as the baseline.
    \item[] Guidelines:
    \begin{itemize}
        \item The answer \answerNA{} means that the paper does not include experiments.
        \item If the paper includes experiments, a \answerNo{} answer to this question will not be perceived well by the reviewers: Making the paper reproducible is important, regardless of whether the code and data are provided or not.
        \item If the contribution is a dataset and\slash or model, the authors should describe the steps taken to make their results reproducible or verifiable. 
        \item Depending on the contribution, reproducibility can be accomplished in various ways. For example, if the contribution is a novel architecture, describing the architecture fully might suffice, or if the contribution is a specific model and empirical evaluation, it may be necessary to either make it possible for others to replicate the model with the same dataset, or provide access to the model. In general. releasing code and data is often one good way to accomplish this, but reproducibility can also be provided via detailed instructions for how to replicate the results, access to a hosted model (e.g., in the case of a large language model), releasing of a model checkpoint, or other means that are appropriate to the research performed.
        \item While NeurIPS does not require releasing code, the conference does require all submissions to provide some reasonable avenue for reproducibility, which may depend on the nature of the contribution. For example
        \begin{enumerate}
            \item If the contribution is primarily a new algorithm, the paper should make it clear how to reproduce that algorithm.
            \item If the contribution is primarily a new model architecture, the paper should describe the architecture clearly and fully.
            \item If the contribution is a new model (e.g., a large language model), then there should either be a way to access this model for reproducing the results or a way to reproduce the model (e.g., with an open-source dataset or instructions for how to construct the dataset).
            \item We recognize that reproducibility may be tricky in some cases, in which case authors are welcome to describe the particular way they provide for reproducibility. In the case of closed-source models, it may be that access to the model is limited in some way (e.g., to registered users), but it should be possible for other researchers to have some path to reproducing or verifying the results.
        \end{enumerate}
    \end{itemize}

\item {\bf Open access to data and code}
    \item[] Question: Does the paper provide open access to the data and code, with sufficient instructions to faithfully reproduce the main experimental results, as described in supplemental material?
    \item[] Answer: \answerYes{} 
    \item[] Justification: Our proposed method is a training-free, plug-and-play algorithm that modifies the inference step of standard diffusion models. The core logic consists of a lightweight statistical estimation and a dynamic thresholding mechanism, which is simple to implement. To ensure reproducibility, we have provided the Python code in Appendix \ref{app:implementation_details}. Since our method acts as a wrapper around existing samplers, no complex retraining code is required.
    \item[] Guidelines:
    \begin{itemize}
        \item The answer \answerNA{} means that paper does not include experiments requiring code.
        \item Please see the NeurIPS code and data submission guidelines (\url{https://neurips.cc/public/guides/CodeSubmissionPolicy}) for more details.
        \item While we encourage the release of code and data, we understand that this might not be possible, so \answerNo{} is an acceptable answer. Papers cannot be rejected simply for not including code, unless this is central to the contribution (e.g., for a new open-source benchmark).
        \item The instructions should contain the exact command and environment needed to run to reproduce the results. See the NeurIPS code and data submission guidelines (\url{https://neurips.cc/public/guides/CodeSubmissionPolicy}) for more details.
        \item The authors should provide instructions on data access and preparation, including how to access the raw data, preprocessed data, intermediate data, and generated data, etc.
        \item The authors should provide scripts to reproduce all experimental results for the new proposed method and baselines. If only a subset of experiments are reproducible, they should state which ones are omitted from the script and why.
        \item At submission time, to preserve anonymity, the authors should release anonymized versions (if applicable).
        \item Providing as much information as possible in supplemental material (appended to the paper) is recommended, but including URLs to data and code is permitted.
    \end{itemize}

\item {\bf Experimental setting/details}
    \item[] Question: Does the paper specify all the training and test details (e.g., data splits, hyperparameters, how they were chosen, type of optimizer) necessary to understand the results?
    \item[] Answer: \answerYes{} 
    \item[] Justification: See Section \ref{experiments results} and other settings are just same as the baseline.
    \item[] Guidelines:
    \begin{itemize}
        \item The answer \answerNA{} means that the paper does not include experiments.
        \item The experimental setting should be presented in the core of the paper to a level of detail that is necessary to appreciate the results and make sense of them.
        \item The full details can be provided either with the code, in appendix, or as supplemental material.
    \end{itemize}

\item {\bf Experiment statistical significance}
    \item[] Question: Does the paper report error bars suitably and correctly defined or other appropriate information about the statistical significance of the experiments?
    \item[] Answer: \answerYes{} 
    \item[] Justification: We analyze the distribution of $\tau$ and validate its effectiveness in Appendix \ref{appen:exp}.
    \item[] Guidelines:
    \begin{itemize}
        \item The answer \answerNA{} means that the paper does not include experiments.
        \item The authors should answer \answerYes{} if the results are accompanied by error bars, confidence intervals, or statistical significance tests, at least for the experiments that support the main claims of the paper.
        \item The factors of variability that the error bars are capturing should be clearly stated (for example, train/test split, initialization, random drawing of some parameter, or overall run with given experimental conditions).
        \item The method for calculating the error bars should be explained (closed form formula, call to a library function, bootstrap, etc.)
        \item The assumptions made should be given (e.g., Normally distributed errors).
        \item It should be clear whether the error bar is the standard deviation or the standard error of the mean.
        \item It is OK to report 1-sigma error bars, but one should state it. The authors should preferably report a 2-sigma error bar than state that they have a 96\% CI, if the hypothesis of Normality of errors is not verified.
        \item For asymmetric distributions, the authors should be careful not to show in tables or figures symmetric error bars that would yield results that are out of range (e.g., negative error rates).
        \item If error bars are reported in tables or plots, the authors should explain in the text how they were calculated and reference the corresponding figures or tables in the text.
    \end{itemize}

\item {\bf Experiments compute resources}
    \item[] Question: For each experiment, does the paper provide sufficient information on the computer resources (type of compute workers, memory, time of execution) needed to reproduce the experiments?
    \item[] Answer: \answerYes{} 
    \item[] Justification: See settings in Section \ref{experiments results}.
    \item[] Guidelines:
    \begin{itemize}
        \item The answer \answerNA{} means that the paper does not include experiments.
        \item The paper should indicate the type of compute workers CPU or GPU, internal cluster, or cloud provider, including relevant memory and storage.
        \item The paper should provide the amount of compute required for each of the individual experimental runs as well as estimate the total compute. 
        \item The paper should disclose whether the full research project required more compute than the experiments reported in the paper (e.g., preliminary or failed experiments that didn't make it into the paper). 
    \end{itemize}
    
\item {\bf Code of ethics}
    \item[] Question: Does the research conducted in the paper conform, in every respect, with the NeurIPS Code of Ethics \url{https://neurips.cc/public/EthicsGuidelines}?
    \item[] Answer: \answerYes{} 
    \item[] Justification: The research conducted in the paper conforms, in every respect, with the NeurIPS Code of Ethics.
    \item[] Guidelines:
    \begin{itemize}
        \item The answer \answerNA{} means that the authors have not reviewed the NeurIPS Code of Ethics.
        \item If the authors answer \answerNo, they should explain the special circumstances that require a deviation from the Code of Ethics.
        \item The authors should make sure to preserve anonymity (e.g., if there is a special consideration due to laws or regulations in their jurisdiction).
    \end{itemize}

\item {\bf Broader impacts}
    \item[] Question: Does the paper discuss both potential positive societal impacts and negative societal impacts of the work performed?
    \item[] Answer: \answerNA{}  
    \item[] Justification: There is no societal impact of the work performed.
    \item[] Guidelines:
    \begin{itemize}
        \item The answer \answerNA{} means that there is no societal impact of the work performed.
        \item If the authors answer \answerNA{} or \answerNo, they should explain why their work has no societal impact or why the paper does not address societal impact.
        \item Examples of negative societal impacts include potential malicious or unintended uses (e.g., disinformation, generating fake profiles, surveillance), fairness considerations (e.g., deployment of technologies that could make decisions that unfairly impact specific groups), privacy considerations, and security considerations.
        \item The conference expects that many papers will be foundational research and not tied to particular applications, let alone deployments. However, if there is a direct path to any negative applications, the authors should point it out. For example, it is legitimate to point out that an improvement in the quality of generative models could be used to generate Deepfakes for disinformation. On the other hand, it is not needed to point out that a generic algorithm for optimizing neural networks could enable people to train models that generate Deepfakes faster.
        \item The authors should consider possible harms that could arise when the technology is being used as intended and functioning correctly, harms that could arise when the technology is being used as intended but gives incorrect results, and harms following from (intentional or unintentional) misuse of the technology.
        \item If there are negative societal impacts, the authors could also discuss possible mitigation strategies (e.g., gated release of models, providing defenses in addition to attacks, mechanisms for monitoring misuse, mechanisms to monitor how a system learns from feedback over time, improving the efficiency and accessibility of ML).
    \end{itemize}
    
\item {\bf Safeguards}
    \item[] Question: Does the paper describe safeguards that have been put in place for responsible release of data or models that have a high risk for misuse (e.g., pre-trained language models, image generators, or scraped datasets)?
    \item[] Answer: \answerNA{} 
    \item[] Justification: The paper poses no such risks.
    \item[] Guidelines:
    \begin{itemize}
        \item The answer \answerNA{} means that the paper poses no such risks.
        \item Released models that have a high risk for misuse or dual-use should be released with necessary safeguards to allow for controlled use of the model, for example by requiring that users adhere to usage guidelines or restrictions to access the model or implementing safety filters. 
        \item Datasets that have been scraped from the Internet could pose safety risks. The authors should describe how they avoided releasing unsafe images.
        \item We recognize that providing effective safeguards is challenging, and many papers do not require this, but we encourage authors to take this into account and make a best faith effort.
    \end{itemize}

\item {\bf Licenses for existing assets}
    \item[] Question: Are the creators or original owners of assets (e.g., code, data, models), used in the paper, properly credited and are the license and terms of use explicitly mentioned and properly respected?
    \item[] Answer: \answerYes{} 
    \item[] Justification: The creators or original owners of assets (e.g., code, data, models), used in the paper, properly credited and are the license and terms of use explicitly mentioned and properly respected.
    \item[] Guidelines:
    \begin{itemize}
        \item The answer \answerNA{} means that the paper does not use existing assets.
        \item The authors should cite the original paper that produced the code package or dataset.
        \item The authors should state which version of the asset is used and, if possible, include a URL.
        \item The name of the license (e.g., CC-BY 4.0) should be included for each asset.
        \item For scraped data from a particular source (e.g., website), the copyright and terms of service of that source should be provided.
        \item If assets are released, the license, copyright information, and terms of use in the package should be provided. For popular datasets, \url{paperswithcode.com/datasets} has curated licenses for some datasets. Their licensing guide can help determine the license of a dataset.
        \item For existing datasets that are re-packaged, both the original license and the license of the derived asset (if it has changed) should be provided.
        \item If this information is not available online, the authors are encouraged to reach out to the asset's creators.
    \end{itemize}

\item {\bf New assets}
    \item[] Question: Are new assets introduced in the paper well documented and is the documentation provided alongside the assets?
    \item[] Answer: \answerNA{} 
    \item[] Justification: The paper does not release new assets.
    \item[] Guidelines:
    \begin{itemize}
        \item The answer \answerNA{} means that the paper does not release new assets.
        \item Researchers should communicate the details of the dataset\slash code\slash model as part of their submissions via structured templates. This includes details about training, license, limitations, etc. 
        \item The paper should discuss whether and how consent was obtained from people whose asset is used.
        \item At submission time, remember to anonymize your assets (if applicable). You can either create an anonymized URL or include an anonymized zip file.
    \end{itemize}

\item {\bf Crowdsourcing and research with human subjects}
    \item[] Question: For crowdsourcing experiments and research with human subjects, does the paper include the full text of instructions given to participants and screenshots, if applicable, as well as details about compensation (if any)? 
    \item[] Answer: \answerNA{} 
    \item[] Justification: The paper does not involve crowdsourcing nor research with human subjects.
    \item[] Guidelines:
    \begin{itemize}
        \item The answer \answerNA{} means that the paper does not involve crowdsourcing nor research with human subjects.
        \item Including this information in the supplemental material is fine, but if the main contribution of the paper involves human subjects, then as much detail as possible should be included in the main paper. 
        \item According to the NeurIPS Code of Ethics, workers involved in data collection, curation, or other labor should be paid at least the minimum wage in the country of the data collector. 
    \end{itemize}

\item {\bf Institutional review board (IRB) approvals or equivalent for research with human subjects}
    \item[] Question: Does the paper describe potential risks incurred by study participants, whether such risks were disclosed to the subjects, and whether Institutional Review Board (IRB) approvals (or an equivalent approval/review based on the requirements of your country or institution) were obtained?
    \item[] Answer: \answerNA{} 
    \item[] Justification: The paper does not involve crowdsourcing nor research with human subjects.
    \item[] Guidelines:
    \begin{itemize}
        \item The answer \answerNA{} means that the paper does not involve crowdsourcing nor research with human subjects.
        \item Depending on the country in which research is conducted, IRB approval (or equivalent) may be required for any human subjects research. If you obtained IRB approval, you should clearly state this in the paper. 
        \item We recognize that the procedures for this may vary significantly between institutions and locations, and we expect authors to adhere to the NeurIPS Code of Ethics and the guidelines for their institution. 
        \item For initial submissions, do not include any information that would break anonymity (if applicable), such as the institution conducting the review.
    \end{itemize}

\item {\bf Declaration of LLM usage}
    \item[] Question: Does the paper describe the usage of LLMs if it is an important, original, or non-standard component of the core methods in this research? Note that if the LLM is used only for writing, editing, or formatting purposes and does \emph{not} impact the core methodology, scientific rigor, or originality of the research, declaration is not required.
    \item[] Answer: \answerNA{} 
    \item[] Justification: The core method development in this research does not involve LLMs as any important, original, or non-standard components.
    \item[] Guidelines:
    \begin{itemize}
        \item The answer \answerNA{} means that the core method development in this research does not involve LLMs as any important, original, or non-standard components.
        \item Please refer to our LLM policy in the NeurIPS handbook for what should or should not be described.
    \end{itemize}

\end{enumerate}